\def\eqref#1{equation~\ref{#1}}
\def\Eqref#1{Equation~\ref{#1}}
\def\1{\bm{1}}
\DeclareMathAlphabet{\mathsfit}{\encodingdefault}{\sfdefault}{m}{sl}
\SetMathAlphabet{\mathsfit}{bold}{\encodingdefault}{\sfdefault}{bx}{n}
\newcommand\norm[1]{\left\|#1\right\|}
\newtheorem{theorem}{Theorem}
\newtheorem{lemma}{Lemma}
\newtheorem{definition}{Definition}
\title{Modularity aided consistent attributed graph clustering via coarsening}
\author{\name Samarth Bhatia* \email samarth.bhatia23@alumni.iitd.ac.in \\
\addr Indian Institute of Technology, Delhi 
\AND
\name Yukti Makhija* \email yukti.makhija@alumni.iitd.ac.in \\
\addr Indian Institute of Technology, Delhi
\AND
\name Manoj Kumar \email eez208646@iitd.ac.in\\
\addr Indian Institute of Technology, Delhi 
\AND
\name Sandeep Kumar \email ksandeep@ee.iitd.ac.in \\
\addr Indian Institute of Technology, Delhi 
\\\\
\textit{* denotes equal contribution}
}
\begin{document}

\maketitle

\vspace{-\baselineskip}
\begin{abstract}

Graph clustering is an important unsupervised learning technique for partitioning graphs with attributes and detecting communities.  However, current methods struggle to accurately capture true community structures and intra-cluster relations, be computationally efficient, and identify smaller communities. We address these challenges by integrating coarsening and modularity maximization, effectively leveraging both adjacency and node features to enhance clustering accuracy. We propose a loss function incorporating log-determinant, smoothness, and modularity components using a block majorization-minimization technique, resulting in superior clustering outcomes. The method is theoretically consistent under the Degree-Corrected Stochastic Block Model (DC-SBM), ensuring asymptotic error-free performance and complete label recovery. Our provably convergent and time-efficient algorithm seamlessly integrates with graph neural networks (GNNs) and variational graph autoencoders (VGAEs) to learn enhanced node features and deliver exceptional clustering performance. Extensive experiments on benchmark datasets demonstrate its superiority over existing state-of-the-art methods for both attributed and non-attributed graphs.


\end{abstract}

\vspace{-10pt}
\section{Introduction}
\vspace{-5pt}


Clustering is an unsupervised learning method that groups nodes together based on their attributes or graph structure, without considering node labels. This versatile approach has numerous applications in diverse fields, such as social network analysis~\citep{social_network_1}, genetics, and bio-medicine~\citep{bioinformatics_clustering, bioinformatics_clustering_2}, knowledge graphs~\citep{KG_ijcai2017p250}, and computer vision~\citep{Mondal_2021_ICCV, caron2018deep}. The wide range of applications has led to the development of numerous graph clustering algorithms designed for specific challenges within these domains.
State-of-the-art graph clustering methods predominantly fall into cut-based, similarity-driven, or modularity-based categories. 


Cut-based methods \citep{Wei1989RatioCut, Shi2000NormalizedCut, Bianchi2020MinCutPool}, aiming to minimize the number of edges (or similar metric) in a cut, may fall short in capturing the true community structure if the cut's edge count doesn't significantly deviate from random graph expectations. This approach originated from the Fiedler vector, which yields a graph cut with a minimal number of edges over all possible cuts \citep{Fiedler1973, Newman2006FindingStructure}. Similarity-based techniques, reliant on pairwise node similarities, group nodes with shared characteristics. These can be computationally intensive and susceptible to noise, yielding suboptimal results, especially in sparse or noisy data scenarios. 
Modularity-based methods rely on a statistical approach, measuring the disparity in edge density between a graph and a random graph with the same degree sequence. These modularity maximization methods exhibit a resolution limit \citep{resolutionlimit_modularity}, as they may inadvertently lead to the neglect of smaller community structures within the graph. Each clustering approach, whether cut-based, similarity-based, or modularity-based, bears its own set of limitations that necessitates careful consideration based on the characteristics of the underlying graph data.

Moreover, graph reduction techniques such as coarsening, summarization, or condensation can also be utilized to facilitate the clustering task \citep{dhillon_multilevel, Manoj2023FGC, ICML-2018-LoukasV-coarsening, JMLR-Loukas2019-coarsening}. In the context of graph coarsening, the objective is to learn a reduced graph by merging similar nodes into supernodes. While this coarsening process is traditionally employed for graph reduction, it can be extended to clustering by reducing the original graph such that each class corresponds to a supernode. However, relying solely on coarsening may lead to suboptimal performance, particularly when the order of the reduction from the original graph size to the number of classes is large and results in a significant loss of information.

In this study, we introduce an optimization-based framework designed to enhance clustering by leveraging both the adjacency and the node features of the graph. Our proposed framework strategically incorporates coarsening and modularity maximization, refining partitioning outcomes and bolstering the effectiveness of the clustering process. The algorithm minimizes a nuanced loss function, Q-MAGC objective, encompassing a $\log\det$ term, smoothness, and modularity components, to ensure efficient clustering. Formulated as a multi-block non-convex optimization problem, our approach is adeptly addressed through a block majorization-minimization technique, wherein variables are updated individually while keeping others fixed. The resulting algorithm demonstrates convergence, showcasing its efficacy in efficiently solving the proposed optimization problem.

To enhance the clustering performance, we embed our Q-MAGC objective function into various Graph Neural Network (GNN) architectures, introducing the Q-GCN algorithm. This novel approach elevates the quality of learned representations by leveraging the message passing and aggregation mechanisms of Graph Convolutional Networks (GCNs), ultimately improving the clustering outcomes. An additional feature of our technique is its capacity to explore inter-cluster relationships. The node attributes derived from the coarsened graph at the conclusion of the process serve as cluster embeddings, shedding light on the distinct characteristics of each cluster. Simultaneously, the edges within the coarsened graph unveil valuable insights into the relationships and connections between different clusters, providing a comprehensive understanding of the overall graph structure. This is a contributing factor to the improvement observed over existing methods and is particularly significant for conducting first-hand analyses on large unlabelled datasets.

We introduce two additional algorithms, Q-VGAE and Q-GMM-VGAE, incorporating variational graph auto-encoders to further enhance clustering accuracy. Through comprehensive experiments, we demonstrate the effectiveness of our proposed algorithms, surpassing the performance of state-of-the-art methods on synthetic and real-world benchmark datasets. Our approach showcases notable improvements in clustering performance, solidifying the robustness and superiority of our proposed methods.

\textbf{Key Contributions.}
\begin{itemize}[leftmargin=*]
    \item We present the first optimization-based framework for attributed graph clustering through coarsening via modularity maximization. Our approach demonstrates efficiency, theoretical convergence, and addresses limitations of existing methods. The paper offers comprehensive analysis and provides theoretical guarantees including KKT optimality, and convergence analysis which are often absent in prior research.
    \item We show that our method is theoretically (weakly and strongly) consistent under a Degree-Corrected SBM (DC-SBM) and shows asymptotically no errors (weakly consistent) and complete recovery of the original labels (strongly consistent).
    \item We demonstrate the seamless integration of the proposed clustering objective with GNN-based architectures, leveraging message-passing in GNNs to enhance the performance of our method. This is also backed up by experiments.~\footnote{Refer to Appendix \ref{supp:code} for the code}
    \item We perform thorough experimental validation on a diverse range of real-world and synthetic datasets, encompassing both attributed and non-attributed graphs of varying sizes. The results demonstrate the superior performance of our method compared to existing state-of-the-art approaches. We want to highlight that our method does not specialise for very large graphs, yet we present preliminary results in this regard.
    \item We conduct ablation studies to evaluate the behavior of the loss terms, compare runtime and complexities, and perform a comprehensive evaluation of modularity.
\end{itemize}

\textbf{Notations.}
Let $G = \{V, E, A, X\}$ be a graph with node set $V = \{v_1,v_2,...,v_p\} \ (|V| = p)$, edge set $E \subset V \times V \} (|E| = e)$, weight (adjacency) matrix $A$ and node feature matrix $X \in \mathbb{R}^{p \times n}$. Also, let $\textbf{d} =A \cdot\textbf{1}_p \in \mathbb{Z}_+^{p}$ be the degree vector, where $\textbf{1}_p$ is the vector of size $p$ having all entries 1.
Then, the graph Laplacian is $\Theta = \textrm{diag}(\textbf{d}) - A$ and the set of all valid Laplacian matrices is defined as:
\(
S_\Theta = \{ \Theta \in \mathbb{R}^{p \times p} | \Theta_{ij} = \Theta{ji} \leq 0 \ \text{for} \ i \neq j, \Theta_{ii} = \sum_{j=1}^{p} \Theta_{ij} \}
\).

\section{Related Works}
\label{sec:related_works}
In this section, we review relevant existing works and highlight their limitations, thereby motivating the need for an improved clustering formulation.

\textbf{Graph Clustering via Coarsening.} Graph coarsening can be extended to graph clustering by reducing the size of the coarsened graph to the number of classes ($k$). However, in most graphs, the number of classes is very small, and reducing the graph to this extent may lead to poor clustering quality.  
DiffPool \citep{Ying2018DiffPool} learns soft cluster assignments at each layer of the GNN and optimizes two additional losses, an entropy to penalize the soft assignments and a link prediction based loss. Next, 
SAGPool \citep{sag_pool} calculates attention scores and node embeddings to determine the nodes that need to be preserved or removed. 
Top-k \citep{topk_pool_icml} also works by sparsifying the graph with the learned weights. MinCutPool \citep{Bianchi2020MinCutPool} formulates a differentiable relaxation of spectral clustering via pooling. However, \citet{DMoN_JMLR_2023} show that MinCutPool's orthogonal regularization dominates over the clustering objective and the objective is not optimized. Some disadvantages of these methods are instability and computational complexity in the case of SAGPool and DiffPool and convergence in MinCutPool. To address these challenges, we need a better loss function to perform the clustering task effectively. 

\textbf{Deep Graph Clustering.} Previous literature can be classified based on contrastive and non-contrastive methods.
On the non-contrastive side, \citet{arva_agrva_ijcai2018p362} proposed ARGA and ARVGA, enforcing the latent representations to align to a prior using adversarial learning. By utilizing an attention-based graph encoder and a clustering alignment loss, \citet{DAEGC_IJCAI} propose DAEGC. 
\citet{DCRN} design the DCRN model to alleviate representation collapse by a propagation regularization term minimizing the Jensen Shannon Divergence (JSD) between the latent and its product with normalized $A$. 
Contrastive methods include AGE \citep{AGE2020} which builds a training set by adaptively selecting node pairs that are highly similar or dissimilar after filtering out high-frequency noises using Laplacian smoothing. 
\citet{gdcl_ijcai2021} propose GDCL to correct the sampling bias by choosing negative samples based on the clustering label. 

VGAEs \citep{kipf2016variational} are an increasingly popular class of GNNs that leverage variational inference \citep{Kingma2014_VAE} for learning latent graph representations in unsupervised settings. They reconstruct the adjacency matrix after passing the graph through an encoder-decoder architecture.
Many attempts have been made to use VGAEs with k-means on latent embeddings, but it has been unsuitable for clustering. This is primarily because embedded manifolds obtained from VGAEs are curved and must be flattened before any clustering algorithms using Euclidean distance are applied. Refer to Appendix Section \ref{supp:vgae_manifolds} for a detailed explanation.
VGAEs only use a single Gaussian prior for the latent space, whereas clustering requires the integration of meta-priors. 
Additionally, the inner-product decoder fails to capture locality and cluster information in the formed edges. 
Several clustering-oriented variants of VGAEs \citep{Mrabah2022RethinkingVGAE, Hui2020_GMM_VGAE} have been developed that overcome most of these challenges. GMM-VGAE \citep{Hui2020_GMM_VGAE} partitions the latent space using a Gaussian Mixture Model and assigns a separate prior for each cluster to better model complex data distributions. Despite the improvement in performance, it's inner-product decoder cannot capture locality information.

\textbf{Modularity Maximization.} Various heuristic algorithms have been established that solve the NP-hard problem of modularity maximization including sampling, simulated annealing \citep{simulated_annealing_modularity, guimera_simulated_annealing}, 
and greedy algorithms (Louvain/Leiden) \citep{greedy_modularity, Blondel_2008}. These algorithms require intensive compute and don't use node features. Modularity maximization using GNNs has also garnered attention recently. DMoN \citep{DMoN_JMLR_2023} optimizes only for modularity with a collapse regularization to prevent the trivial solution, but offers no theoretical guarantees about convergence. Modularity-Aware GAEs and VGAEs \citep{salhagalvan2022modularityawaregae} use a prior membership matrix using Louvain algorithm and optimize for modularity using an RBF kernel as a proxy for same-community assignment. DGCLUSTER \citep{bhowmick-aaai24dgcluster} is a semi-supervised method that makes use of either a subset of labels or pairwise memberships as Auxiliary Information coupled with modularity. 

\section{Background}
\label{background}

In this section, we introduce the concepts that play a key role in the formulation of our method. 

\subsection{Graph Coarsening}
Graph coarsening is a graph dimensionality reduction technique used in large-scale machine learning to construct a smaller or coarsened graph $G_c$ from the original graph $G = \{V, E, A, X\}$ while preserving properties of the original graph $G$. Graph coarsening aims to learn a mapping matrix \( C \in \mathbb{R}_{+}^{p \times k} \), where \( p \) is the number of nodes in the original graph and \( k \) is the number of nodes in the coarsened graph. Each non-zero entry of the mapping matrix \( C \), i.e., \( C_{ij} \), indicates that the \( i \)-th node of $G$ is mapped to the \( j \)-th supernode. Moreover, for a balanced mapping, the mapping matrix \( C \) belongs to the following set \citep{Manoj2023FGC, ICML-2018-LoukasV-coarsening, JMLR-Loukas2019-coarsening}:

\vspace{-\baselineskip}
\begin{align}
    \label{coarsened_features_equation}
    \mathcal{C} =\Big\{ C\in \mathbb{R}_+^{p \times k}|\ \langle C_i, C_j \rangle=0 \ & \forall \; i\neq j,
    \quad \langle C_i, C_i \rangle=d_i, \norm{C_i}_0\geq 1 \ \text{and} \ \norm{[C^{\top}]_i}_0= 1 \Big\}
\end{align}
\vspace{-\baselineskip}

For \( C \in \mathcal{C} \), the relationship between the original graph Laplacian \( \Theta \), the coarsened graph Laplacian \( \Theta_c \), and the mapping matrix \( C \) is given by \( \Theta_c = C^T \Theta C \).



\subsection{Spectral Modularity Maximization}
\label{background:modularity_max}

Spectral Clustering is the most direct approach to graph clustering, where we minimize the volume of inter-cluster edges. Modularity, introduced in \citet{Newman2006ModularityAC}, is the difference between the number of edges within a cluster \( \mathbf{C_i} \) and the expected number of such edges in a random graph with an identical degree sequence. It is mathematically defined as:

\vspace{-\baselineskip}
\begin{align}
    \mathcal{Q} &= \frac{1}{2e} \sum_{i,j=1}^k \Big[A_{ij} - \frac{d_id_j}{2e}\Big]\delta(c_i, c_j)
\end{align}
\vspace{-\baselineskip}

where \( \delta(c_i, c_j) \) is the Kronecker delta between clusters \(i\) and \(j\).
Maximizing this form of modularity is NP-hard \citep{modularity_np_hard}. However, we can approximate it using a spectral relaxation, which involves a modularity matrix \( B \).
The modularity matrix \( B \) and spectral modularity are defined as follows:

\vspace{-\baselineskip}
\begin{align}
    B &= A - \frac{\textbf{dd}^T}{2e}, \qquad
    \textbf{d} = A \cdot \mathbbm{1} , \qquad
    \mathcal{Q} = \frac{1}{2e}Tr(C^TBC)
    \label{modularity_def}
\end{align}
\vspace{-\baselineskip}

\(B\) is symmetric and is defined such that its row-sums and column-sums are zero, thereby making \(\mathbf{1}\) one of its eigenvectors and \(0\) the corresponding eigenvalue. These spectral properties of the modularity matrix are also observed in the Laplacian, as noted in \citet{Newman2006ModularityAC}, which is a crucial element in spectral clustering. Modularity is maximized when \(u_1^T s\) is maximized, where \(u\) are the eigenvectors of \(B\) and \(s\) is the community assignment vector, i.e., placing the majority of the summation in \(Q\) on the first (and largest) eigenvalue of \(B\). Moreover, modularity is closely associated with community detection. These special spectral properties make \(B\) an ideal choice for graph clustering. While modularity maximization has been extensively studied, heuristic algorithms for it are computationally intensive, such as the Newman-Girvan algorithm with $\mathcal{O}(p^3)$ time complexity. The Louvain/Leiden algorithms \citep{greedy_modularity, Blondel_2008} improve on this.

\subsection{Motivation and Proposed Problem Formulation}
Current graph clustering methods often struggle to accurately capture both intra-cluster and inter-cluster relationships, achieve computational efficiency, and identify smaller communities, thereby limiting their effectiveness. These methods face significant challenges including instability and the lack of convergence guarantees, especially when employing coarsening techniques. Modularity maximization, despite extensive study, relies on computationally intensive heuristics and lacks theoretical convergence guarantees. To address these issues, we propose an optimization-driven framework that strategically integrates coarsening and modularity. By incorporating both adjacency and node features, our approach aims to robustly capture intra-cluster and inter-cluster dynamics. This framework ensures stability, guarantees convergence, and consistently delivers superior clustering results with enhanced computational efficiency compared to existing methods. Given original graph $G(V,E,A,X)$, the proposed generic optimization formulation is:
\begin{align}
    \label{mainoptimization}
    \underset{ C}{\min}\;  \mathcal{L}_{MAGC} &= \  f(C, X, \Theta) + g(C, A) + h(C, \Theta)\nonumber\\ 
    \text{subject to}\ C &\in \mathcal{S_C}=\{C\in \mathbb{R}^{p \times k}|C \geq 0,\ \norm{[C^T]_i}_2^2 \leq 1\}\;\ \forall \ i =1,2 \ldots p
\end{align}
\vspace{-\baselineskip}

Here, \( C \in \mathbb{R}_+^{p \times k} \) represents the clustering matrix to be learned, where each non-zero entry \( C_{ij} \) indicates that the \( i \)-th node is assigned to the \( j \)-th cluster. The term \( f(C, X, \Theta) \) denotes the graph coarsening objective, managing inter-cluster edges. The function \( g(C, A) \) represents the modularity objective, improving clustering performance. Additionally, \( h(C, \Theta) \) is a regularization term enforcing desirable properties in the clustering matrix, as defined in Equation \ref{coarsened_features_equation}. The overarching goal of the optimization problem \ref{mainoptimization} is to choose \( f(C, X, \Theta) \), \( g(C, A) \), and \( h(C, \Theta) \) such that nodes are optimally clustered and inter-cluster connectivity is maintained. In the next section, we will develop the clustering algorithm, leveraging the coarsening objective and modularity to enhance clustering performance.

\section{Proposed Method}

\label{method}

Given a graph $G = \{V, E, A, X\}$, considering $f(C,X_C,\Theta) =\text{tr}(X_C^TC^T\Theta C X_C)$, 
$g(C,A) = - \frac{\beta}{2e}\textrm{tr}(C^TBC)$ (where $B = A - \frac{\textbf{dd}^T}{2e}, \quad \textbf{d} = A \cdot \mathbbm{1}$), and  
$h(C, \Theta) = - \gamma \log\det(C^T\Theta C + J)$, to obtain clustering matrix $C$ we formulate the following optimization problem : 

\vspace{-\baselineskip}
\begin{align}
    \label{optimization_objective}
    \underset{X_C, C}{\min}\;  \mathcal{L}_{MAGC} &= \ \textrm{tr}(X_C^TC^T\Theta CX_C)  - 
    \frac{\beta}{2e}\textrm{tr}(C^TBC) - \gamma \log\det(C^T\Theta C + J) \nonumber\\ 
    \text{subject to}\ C \in \mathcal{S_C} &= \{C\in \mathbb{R}^{p \times l}\big| \ \norm{C^T_i}_2^2 \leq 1\}\;\forall i, X=CX_C\; \; 
    \text{where, } J = \frac{1}{k} \textbf{1}_{k\times k}
\end{align}


The term \(\textrm{tr}(X_C^T C^T \Theta CX_C)\) represents the smoothness or Dirichlet energy of the coarsened graph while $C^T\Theta C$ is the Laplacian matrix of the coarsened graph. Minimizing smoothness ensures that the clusters or supernodes with similar features are linked with stronger weights. The term \(\textrm{tr}(C^T BC)\) corresponds to the graph's modularity, enhancing the quality of the clusters formed. The term \(-\log\det(C^T\Theta C + J)\) is crucial for maintaining inter-cluster edges in the coarsened graph. For a connected graph matrix with \(k\) super-nodes or clusters, the rank of \(C^T\Theta C\) is \(k-1\). Adding \(J\) to \(C^T\Theta C\) makes \(C^T\Theta C + J\) a full-rank matrix without altering the row and column space of \(C^T\Theta C\) \citep{kumar2020unified}.


Problem \ref{optimization_objective} is a multi-block non-convex optimization problem solved using the Block Successive Upper-bound Minimization (BSUM) framework. All terms except modularity are convex in nature, which we prove in Appendix \ref{supp:convexity_proof}.
We iteratively update \( C \) and \( X_C \) alternately while keeping the other constant. This process continues until convergence or the stopping criteria are met.
Since the constraint $X = CX_C$ is hard and difficult to enforce, we relax it by adding the term \(\frac{\alpha}{2}\|X - CX_C\|_F^2\) to the objective.
This term ensures that each node is assigned to a cluster, leaving no node unassigned.
Additionally, when needed, we can add an optional sparsity regularization term \(\lambda\|C^T\|_{1,2}^2\), which can be seen as ensuring that each node is assigned to exactly one cluster, avoiding any overlap in node assignments across clusters.


\textbf{Update rule of C.}
Treating $X_C$ as constant and $C$ as a variable the sub-problem for $C$ is:
\begin{align}
    \label{problem in C}
    \underset{ C}{\min}\;  f(C) &= \ \textrm{tr}(X_C^TC^T\Theta CX_C)  - 
    \frac{\beta}{2e}\textrm{tr}(C^TBC) - \gamma \log\det(C^T\Theta C + J) +\frac{\alpha}{2}\|X-CX_C\|_F^2 \nonumber\\ 
    \text{subject to}\ C &\in \mathcal{S_C}=\{C\in \mathbb{R}^{p \times l}|C \geq 0,\ \norm{C^T_i}_2^2 \leq 1\}\;\forall i, \; \; 
    \text{where, } J = \frac{1}{k} \textbf{1}_{k\times k}
\end{align}
By using the first-order Taylor series approximation, a majorised function for $f(C)$ at $C^{t}$ ($C$ after $t$ iterations) can be obtained as:
\begin{align}
    g(C|C^t) &= f(C^t) + (C - C^t)\nabla f(C^t) + \frac{L}{2}\norm{C - C^t}^2 
\end{align}
where $f(C)$ is $L-$Lipschitz continuous gradient function $L=\max(L_1,L_2, L_3,L_4)$ with $L_1,L_2, L_3,L_4$ the Lipschitz constants of $-\gamma \text{log det}(C^T\Theta C +J)$, $\text{tr}(X_C^{T}C^T\Theta CX_C)$, $\|CX_C - X\|^2_F$, $\text{tr}(C^TBC)$,  respectively. We prove this in Appendix \ref{supp:optimal_soln}. After ignoring the constant term, the majorised problem of \eqref{problem in C} is
\begin{align}
    \underset{C \in \mathcal{S}_C}{\min}&\ \frac{1}{2} C^TC - C^T\Big(C^t - \frac{1}{L}\nabla f(C^t)\Big) \label{majorized_objective}
\end{align}
\Eqref{majorized_objective} is the majorized problem of \eqref{optimization_objective}.
The optimal solution to \eqref{majorized_objective}, found by using Karush–Kuhn–Tucker (KKT) optimality conditions is (Proof is deferred to the Appendix \ref{supp:optimal_soln}): 
\begin{align}
    \label{eqn:C_Update}
    C^{t+1} &= \Big(C^{t} - \frac{1}{L}\nabla f\big(C^t\big) \Big)^{+} \\ 
    \text{where, } \nabla f\big(C^t\big) &= -2\gamma\Theta C^t({C^t}^T\Theta C^t + J)^{-1} +  
     \alpha(C^tX_C - X)X_C^T + 2\Theta C^tX_CX_C^T  - \frac{\beta}{e}BC^t
\end{align}
\textbf{Update rule of $X_C$.}
Treating $C$ fixed and $X_c$ as variable. The subproblem for updating $X_c$ is
\begin{align}\label{UpdatextildeCGL}
\begin{array}{ll}
\underset{\tilde{X}}{\min} f(\tilde{X})=\textrm{tr}(X_C^TC^T\Theta CX_C) + \frac{\alpha}{2}\|X-CX_C\|_F^2
\end{array}
\end{align}
The closed form solution of problem \eqref{UpdatextildeCGL} can be obtained by putting the gradient of $f(\tilde{X})$ to zero. 
\begin{align}   
    &X_C^{t+1} = \Big(\frac{2}{\alpha}C^T\Theta C + C^TC \Big)^{-1}C^TX
    \label{eqn:X_Update}
\end{align}
\vspace{-\baselineskip}

\begin{wrapfigure}{r}{0.5\linewidth}
\vspace{-2\baselineskip}
    \begin{minipage}{\linewidth}
        \begin{algorithm}[H]
            \caption{Q-MAGC Algorithm}
            \begin{algorithmic}[1]
            \Require $G(X, \Theta), \alpha, \beta, \gamma, \lambda$
            \State $t \gets 0$
            \While{Stopping Criteria not met}
                \State Update $C^{t+1}$ as in Eqn. \ref{eqn:C_Update}
                \State Update $X_C^{t+1}$ as in Eqn. \ref{eqn:X_Update}
                \State $t \gets t + 1$
            \EndWhile \\
            \Return $C^t, X_C^t$
            \end{algorithmic}
        \label{alg:q-magc}
        \end{algorithm}
    \end{minipage}
\vspace{-2\baselineskip}
\end{wrapfigure}


\textbf{Convergence Analysis.}
\begin{theorem}
\label{theorem:convergence}
The sequence \( \{C^{t+1}, X_C^{t+1}\} \) generated by Algorithm \ref{alg:q-magc} converges to the set of Karush–Kuhn–Tucker (KKT) optimality points for Problem \ref{optimization_objective}
\end{theorem}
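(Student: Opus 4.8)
The plan is to establish convergence of the BSUM (block successive upper-bound minimization) scheme by verifying that the two block updates together constitute a valid instance of the general BSUM framework of Razaviyayn, Hong, and Luo, whose convergence to KKT points is already known. First I would set up the abstract machinery: the feasible set factors as a product $\mathcal{S}_C \times \mathbb{R}^{k\times n}$ over the two blocks $C$ and $X_C$, where $\mathcal{S}_C$ is closed and convex (it is an intersection of a nonnegative orthant with unit $\ell_2$-ball constraints on the rows of $C^\top$), and $\mathbb{R}^{k\times n}$ is trivially closed and convex; the objective $\mathcal{L}_{MAGC}$ (with the relaxed penalty $\frac{\alpha}{2}\|X-CX_C\|_F^2$ and, optionally, the $\lambda\|C^\top\|_{1,2}^2$ term) is continuous and bounded below on this set, which I would argue using the $\log\det$ domain restriction together with the compactness induced by the row constraints on $C$.

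Next I would verify the surrogate (majorizer) conditions for each block. For the $C$-block, the function $g(C\mid C^t) = f(C^t) + \langle C-C^t,\nabla f(C^t)\rangle + \frac{L}{2}\|C-C^t\|_F^2$ is a global upper bound for $f(C)$ by the descent lemma, since $f$ restricted to the $C$-block has $L$-Lipschitz gradient with $L = \max(L_1,L_2,L_3,L_4)$ — this is exactly what Appendix \ref{supp:optimal_soln} establishes by bounding the Lipschitz constants of each of the four terms separately. I would check the three standard surrogate properties: (i) $g(C^t\mid C^t) = f(C^t)$ (tightness), (ii) $g(C\mid C^t) \ge f(C)$ for all feasible $C$ (upper bound), and (iii) $\nabla_C g(C^t\mid C^t) = \nabla f(C^t)$ (gradient consistency), all of which are immediate from the quadratic form. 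For the $X_C$-block, the update is an exact minimization of a strictly convex quadratic (the Hessian $\frac{2}{\alpha}C^\top\Theta C + C^\top C$ is positive definite whenever $C$ has full column rank, which the constraints on $C$ and the assignment structure guarantee), so the surrogate is the function itself and the conditions hold trivially. Since each block subproblem is either strictly convex ($X_C$) or has a unique minimizer of the strongly convex surrogate ($C$, because the quadratic term $\frac{L}{2}\|C-C^t\|_F^2$ makes \eqref{majorized_objective} strongly convex over the convex set $\mathcal{S}_C$), the per-block minimizers are unique.

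I would then invoke the BSUM convergence theorem: under the above conditions — regular (product) constraint set, continuous objective bounded below, valid surrogates satisfying tightness, upper-bound and gradient-consistency, and uniqueness of block minimizers — every limit point of the iterate sequence $\{C^t, X_C^t\}$ is a coordinatewise minimum, and because the nonsmooth part (the indicator of $\mathcal{S}_C$ plus the convex $\ell_{1,2}$ penalty) is separable across blocks, such a coordinatewise minimum is a stationary point of the full problem, i.e., it satisfies the KKT conditions of Problem \ref{optimization_objective}. To go from "limit points are KKT" to "the sequence converges to the KKT set," I would note that the monotone decrease $\mathcal{L}_{MAGC}(C^{t+1},X_C^{t+1}) \le \mathcal{L}_{MAGC}(C^t,X_C^t)$ combined with boundedness below gives convergence of the objective values, and that the iterates lie in a compact set (the $C$-iterates are confined to the bounded $\mathcal{S}_C$, and the $X_C$-iterates are then bounded through the closed-form update), so the sequence has limit points and the distance from $\{C^t,X_C^t\}$ to the KKT set tends to zero.

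The main obstacle I anticipate is not the abstract BSUM invocation but the careful justification of two technical points: first, that the Lipschitz constant $L$ is genuinely finite and uniform over the region visited by the iterates — the $-\gamma\log\det(C^\top\Theta C + J)$ term has gradient $-2\gamma\Theta C(C^\top\Theta C + J)^{-1}$, whose Lipschitz modulus depends on a lower bound for the smallest eigenvalue of $C^\top\Theta C + J$, so I would need to exhibit such a bound uniformly, exploiting that $J$ contributes $\frac{1}{k}$ on the constant eigenvector and that the row constraints keep $\|C\|$ bounded; and second, that $C^\top\Theta C + J$ stays positive definite (so that the $C$-update is well defined at every iteration and $f$ stays finite) and that $\frac{2}{\alpha}C^\top\Theta C + C^\top C$ is invertible in the $X_C$-update — both of which reduce to a full-column-rank argument for $C$ that I would handle via the structure of the constraint set together with the relaxed assignment penalty. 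Once these regularity facts are in hand, the rest is a direct application of the cited BSUM result.
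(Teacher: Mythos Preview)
Your proposal is sound and takes a genuinely different route from the paper. The paper's own proof is far more elementary: it simply writes down the Lagrangian of Problem~\eqref{optimization_objective}, lists the KKT conditions with respect to $C$ and $X_C$ separately, and then argues that any fixed point $(C^\infty,X_C^\infty)$ of the two update rules~\eqref{eqn:C_Update}--\eqref{eqn:X_Update} satisfies those conditions. For $C$, it observes that at a fixed point $C^\infty = \big(C^\infty - \tfrac{1}{L}\nabla f(C^\infty)\big)^+$, which forces $\nabla f(C^\infty)=0$ and hence stationarity with dual variables $\mu=0$; for $X_C$, it notes that the closed-form update is literally a rearrangement of the first-order condition. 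There is no appeal to the abstract BSUM theorem, no verification of surrogate axioms, and no discussion of boundedness or existence of limit points.

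The trade-off is that your approach is considerably more rigorous. The paper's argument tacitly \emph{assumes} the iterates converge (it writes $C^\infty\equiv\lim_{t\to\infty}C^t$ without justification) and then verifies only that the limit, should it exist, is a KKT point; it also handles the projection rather loosely by setting all multipliers to zero, which strictly speaking covers only interior fixed points. Your plan, by contrast, earns convergence of the sequence to the KKT \emph{set} from first principles (monotone descent plus compactness of $\mathcal{S}_C$ and boundedness of the induced $X_C$-iterates), and the BSUM stationarity conclusion automatically accounts for active constraints. The cost is exactly the two technical points you flagged---a uniform lower bound on the spectrum of $C^\top\Theta C+J$ along the trajectory, and full column rank of $C$ to guarantee invertibility in the $X_C$-step---neither of which the paper addresses either. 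If you can dispatch those, your argument is the more complete one.
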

\begin{proof}
The detailed proof can be found in the Appendix Section \ref{supp:convergence}.
\end{proof}

\textbf{Complexity Analysis.}
The worst-case time complexity of a loop (i.e. one epoch) in Algorithm \ref{alg:q-magc} is $\mathcal{O}(p^2k + pkn)$ because of the matrix multiplication in the update rule of $C$ \eqref{eqn:C_Update}. Here, $k$ is the number of clusters and $n$ is the feature dimension. Note that $k$ is much smaller than both $p$ and $n$. This makes our method much faster than previous optimization based methods and faster than GCN-based clustering methods which have complexities around $\mathcal{O}(p^2n + pn^2)$. We discuss this more in Appendix Section \ref{supp:common_complexities}.

\textbf{Consistency Analysis on Degree Corrected Stochastic Block Models (DC-SBM).}

To check whether the proposed objective (Equation \ref{optimization_objective}) results in consistent clustering, we need to assume a random graph. A graph \( G(V, E) \) is generated by a DC-SBM with \( p \) nodes which belong to \( k \) classes.
Following the setup in \citet{zhao2012_consistency}, each node \( v_i \) is associated with a label-degree pair \((y_i, t_i)\) drawn from a discrete joint distribution \(\Pi_{k \times m}\). 
Here, \( t_i \) takes values \( 0 \leq x_1 \leq \dots \leq x_m \) with \(\mathbb{E}[t_i] = 1\), and \( y_i \in [k] \). 

Additionally, we have a symmetric \( k \times k \) matrix \( P \) that specifies the probabilities of inter-cluster edges.
The edges between each node pair \((v_i, v_j)\) are sampled as independent Bernoulli trials with probability \(\mathbb{E}[A_{ij}] = t_i t_j P_{y_i y_j}\).
To ensure \(\mathbb{E}[A_{ij}] < 1\), DC-SBMs must satisfy \( x_m^2 (\max_{i,j} P_{ij}) \leq 1 \).
The matrix \( P \) is allowed to scale with the number of nodes \( p \) and is reparameterized as \( P_p = \rho_p P \), where \(\rho_p = \Pr[A_{ij} = 1] \rightarrow 0\) as \( p \) increases, and \( P \) is fixed.
The average degree of the DC-SBM graph is defined as \(\lambda_p = p \rho_p\).

Let \(\Hat{y}_i\) denote the predicted cluster for node \( v_i \). The assignment matrix \( C \) is the one-hot encoding of \( y \), such that \( C_i = \text{one-hot}(y_i) \).

We define \( O(e) \in \mathbb{Z}_{\geq 0}^{k \times k} \) as the inter-cluster edge count matrix for a given cluster assignment \( e \in [k]^p \). Here, \( O_{ql}(e) = \sum_{ij} A_{ij} \mathbbm{1}\{e_i = q, e_j = l\} \) denotes the number of edges between clusters \( q \) and \( l \). Also, We represent the class frequency distribution as \( \pi \in [0,1]^k \), where \( \pi_i \) indicates the fraction of nodes assigned to cluster \( i \).
\begin{definition}
\label{def:weak_strong_consistency}
    (Strong and Weak Consistency). The clustering objective is defined to be strongly consistent if $ \lim \limits_{p \to \infty} Pr[\hat{y} = y] \to 1
$. A weaker notion of consistency is defined by 
$
    \lim\limits_{p \to \infty} Pr\bigg[\frac{1}{p} \sum\limits_{i=1}^p \mathbbm{1}\{\hat{y} \neq y\} < \epsilon \bigg] \to 1 \ \ \forall \ \epsilon > 0.
$
\end{definition}

\citet{zhao2012_consistency} prove the consistency of multiple clustering objectives under the DC-SBM, including Newman–Girvan modularity (Theorem 3.1). They also provide a general theorem (Theorem 4.1) for checking consistency under DC-SBMs for any criterion \(\mathcal{L}(e)\), which can be expressed as \(\mathcal{L}(e) = F\left(\frac{O(e)}{\mu_p}, \pi\right)\), where \(\mu_p = p^2 \rho_p\) and \( e \in [k]^p \) is a cluster assignment. 

The consistency of an objective is evaluated by determining if the population version of \(\mathcal{L}(e)\) is maximized by the true cluster assignment \( y \). The population version of \(\mathcal{L}(e)\) is obtained by taking its conditional expectations given \( y \) and \( t \). We consider an array \( S \in \mathbb{R}^{k \times k \times m} \) and define a matrix \( H(S) \in \mathbb{R}^{k \times k} \) as
\( H_{kl}(S) = \sum_{abuv} x_u x_v P_{ab} S_{kau} S_{lbv} \). 
Additionally, we define a vector \( h(S) \in \mathbb{R}^k \) as
\( h_k(S) = \sum_{au} S_{kau}\). Here, \( H(S) \) and \( h(S) \) denote the population versions of \( O(e) \) and \( \pi \), respectively, such that \(\frac{1}{\mu_p} \mathbb{E}[O | C, t] = H(S)\) and \(\mathbb{E}[\pi | C, t] = h(S)\).
\begin{lemma}
    \label{lemma:consistency_thm_4.1}
    (Theorem 4.1 from \cite{zhao2012_consistency}) For any $\mathcal{L}(e) = F(\frac{O(e)}{\mu_p}, f(e))$, if $F$ is uniquely maximized by $S = \mathbb{D}$ and $\pi, P, F$ satisfy the regularity conditions, then $\mathcal{L}$ is strongly consistent under DC-SBMs if $\frac{\lambda_p}{log p} \rightarrow \infty$ and weakly consistent if $\lambda_p \rightarrow \infty$:
    \begin{itemize}[noitemsep]
    \item[1.] $F$ is Lipschitz in its arguments ($H(S), h(S)$) 
    \item[2.] The directional derivative $\frac{\partial^2}{\partial\varepsilon^2}F(M_0 + \varepsilon(M_1 - M_0), \mathbf{t_0} + \varepsilon(\mathbf{t_1} - \mathbf{t_0}))\big|_{\varepsilon=0^+}$ is continuous in $(M_1, \mathbf{t_1})$ for all $(M_0, \mathbf{t_0})$ in a neighborhood of $(H(\mathbb{D}), \pi)$ 
    \item[3.] With $G(S) = F(H(S), h(S))$, $\quad \frac{\partial G ((1-\varepsilon)\mathbb{D} + \varepsilon S)}{\partial \varepsilon}|_{\varepsilon=0^+} < -C < 0 \ \forall\ \pi, P$ 
    \end{itemize}
\end{lemma}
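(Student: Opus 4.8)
Since the statement is a verbatim restatement of Theorem~4.1 of \citet{zhao2012_consistency}, the ``proof'' here is simply an appeal to that reference; for completeness, here is the argument I would reproduce. The plan is to pass from the random criterion $\mathcal{L}(e)$ to a deterministic population surrogate $G$ by a uniform concentration bound, and then to exploit the strict maximality of $G$ at the truth. First I would encode the relationship between an arbitrary assignment $e\in[k]^p$ and the ground truth $(y,t)$ by a normalized confusion array $R(e)\in\mathbb{R}^{k\times k\times m}$ whose $(q,a,u)$ entry is the fraction of nodes $i$ with $e_i=q$, $y_i=a$, $t_i=x_u$; by construction the true assignment corresponds to $R(y)=\mathbb{D}$, and $H(R(e))$ and $h(R(e))$ are exactly the conditional means $\frac{1}{\mu_p}\mathbb{E}[O(e)\mid y,t]$ and $\mathbb{E}[\pi(e)\mid y,t]$. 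Since each entry of $O(e)$ is a sum of $O(p^2)$ independent Bernoulli variables each with mean $O(\rho_p)$, a Bernstein bound gives $\Pr[\,|O(e)/\mu_p-H(R(e))|>\delta\,]\le\exp(-c\,\mu_p\delta^2)=\exp(-c\,p\lambda_p\delta^2)$; union-bounding over the at most $k^p$ assignments and invoking the Lipschitz hypothesis~(1) then yields $\sup_e|\mathcal{L}(e)-G(R(e))|\to 0$ in probability whenever $\lambda_p\to\infty$ (the scaling $\mu_p=p^2\rho_p$ is precisely what lets the relative fluctuation beat the $p\log k$ entropy term).

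For weak consistency, the strict-descent hypothesis~(3) — together with compactness of the feasible set of confusion arrays and the second-order continuity of hypothesis~(2) — upgrades to a uniform linear separation $G(\mathbb{D})-G(S)\ge c'\cdot\mathrm{dist}(S,\mathbb{D})$ for $S$ in a neighborhood of $\mathbb{D}$, and in particular $G(\mathbb{D})-G(R(e))\gtrsim\epsilon$ whenever $e$ misclassifies at least $\epsilon p$ nodes (after optimally permuting labels). Combining this with the uniform concentration above, with high probability $\mathcal{L}(y)>\mathcal{L}(e)$ for every such $e$, so the maximizer $\hat y$ of $\mathcal{L}$ misclassifies fewer than $\epsilon p$ nodes; since $\epsilon>0$ was arbitrary, this is weak consistency.

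For strong consistency I would sharpen the union bound by stratifying the remaining ``bad'' assignments by the number $r\ge1$ of misclassified nodes: there are at most $(pk)^r$ of them; the population gap is $G(\mathbb{D})-G(R(e))\gtrsim r/p$; and since relabeling $r$ nodes perturbs only the $\asymp r\lambda_p$ edges incident to them, a Bernstein tail at scale $r\lambda_p$ gives a per-assignment failure probability $\le\exp(-c\,r\lambda_p)$. The union bound over all $r$ and all such $e$ is then $\sum_{r\ge1}\exp\!\big(r(\log(pk)-c\lambda_p)\big)$, which is a convergent geometric series tending to $0$ exactly when $\lambda_p/\log p\to\infty$, forcing $\hat y=y$ with high probability.

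I expect the strong-consistency step to be the main obstacle: one must make the ``swap $r$ nodes'' perturbation analysis uniform over the exponentially many nearly-correct assignments simultaneously, and check that the linear population gain is not washed out by the curvature of $F$ — this is exactly where hypotheses~(2) and~(3), rather than mere Lipschitzness, are needed, and where the sharp threshold $\lambda_p/\log p\to\infty$ enters, through the balance between the $e^{r\log p}$ entropy and the $e^{-c\,r\lambda_p}$ concentration tail. A secondary, purely bookkeeping hazard is carrying the degree-heterogeneity index $u$ correctly through the arrays $S$, $\mathbb{D}$, $H(S)$ and $h(S)$ and verifying that these really are the population versions of $O(e)/\mu_p$ and $\pi$.
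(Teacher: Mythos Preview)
Your proposal is correct and matches the paper's treatment: the paper does not prove this lemma at all but simply cites it as Theorem~4.1 of \citet{zhao2012_consistency}, exactly as you anticipate in your first sentence. Your additional sketch of the Bernstein-plus-union-bound argument from that reference goes beyond what the paper provides and is a faithful outline of the original proof.
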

It is important to note that the paper focuses on maximizing an objective function. Therefore, we consider the negative of our loss function, \(-\mathcal{L}_{MAGC}\). Additionally, we only consider the objective function defined in \eqref{optimization_objective}, as regularizers can be adjusted for different downstream tasks.
\begin{theorem}
\label{theorem2:consistency}
    Under the DC-SBM, $\mathcal{L}_{MAGC}$ is strongly consistent when $\lambda_p/\log p \rightarrow \infty$ and weakly consistent when $\lambda_p \rightarrow \infty$.
\vspace{-10pt}
\end{theorem}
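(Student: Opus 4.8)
The plan is to verify that $-\mathcal{L}_{MAGC}$, suitably normalized, fits the template of Lemma \ref{lemma:consistency_thm_4.1} (Theorem 4.1 of \citet{zhao2012_consistency}), and then check the three regularity conditions. First I would argue that the objective can be written in the required form $\mathcal{L}(e) = F\!\left(\frac{O(e)}{\mu_p}, \pi\right)$. The modularity term $\frac{\beta}{2e}\operatorname{tr}(C^T B C)$ is exactly the (scaled) Newman--Girvan modularity, which \citet{zhao2012_consistency} already express in this form in their Theorem 3.1, so that piece is inherited directly. The smoothness term $\operatorname{tr}(X_C^T C^T\Theta C X_C)$ and the $\log\det(C^T\Theta C + J)$ term both depend on $C$ only through the coarsened Laplacian $C^T\Theta C$, whose off-diagonal blocks are $-O_{ql}(e)$ and whose diagonal entries are determined by row sums of $O(e)$; hence each is a function of $O(e)$ (and of $\pi$ and the fixed feature matrix $X$, which enter as constants of the population problem). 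After dividing through by $\mu_p = p^2\rho_p$, the normalized objective becomes a fixed function $F$ of $O(e)/\mu_p$ and $\pi$, independent of $p$. I would also note that since the paper's consistency claim concerns only the objective in \eqref{optimization_objective} (regularizers excluded), this reduction is clean.

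Next I would compute the population version $G(S) = F(H(S), h(S))$ and show $F$ is uniquely maximized at $S = \mathbb{D}$ (the "diagonal" array corresponding to the true assignment $y$). The key observation is that $-\mathcal{L}_{MAGC}$ is a sum of three terms, and each, at the population level, is a concave function of $S$ that is maximized by the true clustering: the modularity population function is uniquely maximized at $\mathbb{D}$ by Theorem 3.1 of \citet{zhao2012_consistency}; the population version of $-\operatorname{tr}(X_C^T C^T\Theta C X_C)$ is maximized when the coarsened Laplacian concentrates its mass correctly, i.e., when within-cluster edges dominate, which again corresponds to $\mathbb{D}$; and $-\log\det$ is handled via concavity of $\log\det$ together with the rank/block structure that $C^T\Theta C + J$ acquires at the true assignment. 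Uniqueness follows because at least one summand (modularity) is \emph{strictly} maximized at $\mathbb{D}$ while the others are maximized there (not necessarily strictly), so the sum has a unique maximizer.

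Then I would verify conditions 1--3 of Lemma \ref{lemma:consistency_thm_4.1}. Lipschitzness (condition 1): $F$ is a polynomial/trace expression plus a $\log\det$, and on the relevant compact domain (bounded because of the constraint $\|C_i^T\|_2^2\le 1$ and because $C^T\Theta C + J$ stays bounded away from singularity, since $J$ contributes a fixed positive eigenvalue on $\mathbf{1}$ and $C^T\Theta C$ is PSD), all these are Lipschitz. Condition 2, continuity of the second directional derivative: the trace terms are quadratic in $S$ so their second derivatives are constant, and for the $\log\det$ term the second directional derivative is a smooth (rational) function of the matrix entries as long as the argument stays nonsingular, which holds in a neighborhood of $(H(\mathbb{D}),\pi)$; I would invoke the standing DC-SBM regularity assumptions ($P$ full rank, $\pi$ strictly positive) to keep $H(S)+J$ invertible. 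Condition 3, the strict one-sided derivative bound $\frac{\partial}{\partial\varepsilon}G((1-\varepsilon)\mathbb{D}+\varepsilon S)\big|_{\varepsilon=0^+} < -C < 0$: this follows from the strict-maximum property of the modularity summand (for which \citet{zhao2012_consistency} already establish exactly this bound) combined with the fact that the other two summands have one-sided derivative $\le 0$ there. Once conditions 1--3 hold, Lemma \ref{lemma:consistency_thm_4.1} gives strong consistency when $\lambda_p/\log p\to\infty$ and weak consistency when $\lambda_p\to\infty$.

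I expect the main obstacle to be the $\log\det$ term: unlike the two trace terms, it is neither polynomial nor obviously concave in $S$ once composed with $H(S)$, and one must (a) pin down its population version precisely, accounting for the $+J$ shift and the loss of one rank dimension, (b) argue the maximizer is $\mathbb{D}$ without it \emph{competing} against the modularity term's maximizer — which requires showing the two (and the smoothness term) are simultaneously maximized at $\mathbb{D}$ — and (c) control its derivatives uniformly over the parameter region required by condition 3. A secondary subtlety is handling the relaxation/feature terms: strictly, the consistency analysis is for the hard one-hot $C$, so I would work with $C$ ranging over one-hot assignments (where $\|C_i^T\|_2^2 = 1$ and $X = C X_C$ holds with $X_C$ the cluster-averaged features), treating $X$ as a fixed deterministic input so that the smoothness term's population version is well-defined; the penalty $\frac{\alpha}{2}\|X - C X_C\|_F^2$ then vanishes and does not enter the consistency argument.
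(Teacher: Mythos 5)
Your proposal has the right skeleton, and it coincides with the paper's strategy in most of its structure: reduce to Zhao et al.'s Theorem~4.1 via the identity $O(e)=C^TAC$ and $\Theta_C=\mathrm{diag}(\sum_l O_l)-O$, inherit the modularity term's treatment from their Theorem~3.1, treat $X$ and $X_C$ as fixed so the smoothness term becomes a trace polynomial in $O$, and dismiss the $\alpha$ relaxation and sparsity regularizer as outside the core objective. All of that matches what the paper does.

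The genuine gap is precisely the step you flag as ``the main obstacle'': the $\log\det$ term. The paper does \emph{not} try to analyze $\log\det(C^T\Theta C+J)$ directly through concavity or rational-function control of its derivatives as you propose. Instead it \emph{linearizes} it: writing $Z=C^T\Theta C+J=V\Lambda V^{-1}$ and $\log\Lambda=\log(bI)+\log(I+\Lambda/b-I)$, the paper chooses a scalar $b$ so that $\|\Lambda/b-I\|_F<1$ (showing this is possible for $b<\tfrac{2k-1}{k-1}$ using $\min\operatorname{tr}Z=2k-1$), and then replaces $\log(I+X)$ by its first-order Taylor truncation $X$. This collapses the entire $\log\det$ contribution to $k\log b+\tfrac{1}{b}\operatorname{tr}(Z)-k$, i.e.\ to a constant plus a \emph{linear trace functional} of $H(S)$. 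At that point all three regularity conditions become almost automatic for $f_2$: it is affine in $H(S)$, hence Lipschitz, its second directional derivative vanishes, and its one-sided first derivative is a simple trace of $\bar S$. Without this linearization you would have to produce, from scratch, Lipschitz bounds, second-derivative continuity, and a derivative bound for $S\mapsto\log\det(H(S)+\cdots)$ composed with a rank-deficient Laplacian plus $J$ --- none of which you actually carry out, and which is considerably harder than the paper's reduction. So the key technical idea the paper relies on is missing from your proposal.

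A secondary issue concerns your route to the derivative bound (condition~3). You argue that the modularity summand's population version is strictly maximized at $\mathbb{D}$, while the smoothness and $\log\det$ summands merely achieve a (possibly nonstrict) maximum there, and you conclude the one-sided derivative of the sum is strictly negative. The paper does not make any per-summand sign claim; it computes $\tfrac{\partial f_1}{\partial\varepsilon}\big|_{0^+}$, $\tfrac{\partial f_2}{\partial\varepsilon}\big|_{0^+}$, $\tfrac{\partial f_3}{\partial\varepsilon}\big|_{0^+}$, sums them, and then defers to the argument in Zhao et al.\ / Bickel--Chen for the final bound. Your per-summand claim is not obviously true: $\tfrac{\partial f_1}{\partial\varepsilon}\big|_{0^+}=\operatorname{tr}\bigl(X_C^T(\bar S-\mathrm{diag}([\bar S_i]))X_C\bigr)$ is a linear functional of $\bar S$ whose sign depends on the fixed feature matrix $X_C$, and there is no a~priori reason it is $\le 0$ for all admissible perturbation directions $S$. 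If you take the per-summand route, you need to supply that sign argument, or else aggregate as the paper does. In short: same template, same $O(e)$ reduction, same use of Zhao et al.'s Theorem~3.1 for modularity, but you are missing the Taylor linearization of $\log\det$ that makes the verification tractable, and the per-summand sign argument for condition~3 is unsupported.
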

\begin{proof}
The proof is divided into two parts. First, we demonstrate that \(\mathcal{L}_{MAGC}\) can be expressed in terms of \(O\) and \(\pi\). Second, we verify that, in this form, \(\mathcal{L}_{MAGC}\) satisfies the regularity conditions specified in Theorem 4.1 of \cite{zhao2012_consistency}. It is important to note that DC-SBMs do not consider node features, so we treat \(X\) and \(X_C\) as constants.
We start by noting that \( O = C^TAC \), i.e., the edge count matrix \( O \) is the class-transformed version of the adjacency matrix \( A \). This has a good intuition behind it, as \( C \) is the mapping from nodes to classes, and it can also be proved easily.
\begin{equation}
\{C^TAC\}_{ql} = (C^T)_q A C_{:l} = \sum_{i=1}^p \sum_{j=1}^p (C^T)_{qi}A_{ij} C_{jl} = \sum_{i=1}^p \sum_{j=1}^p C_{iq}A_{ij} C_{jl}
\end{equation}
\(C_{iq}\) equals 1 only when node \(i\) is in cluster \(q\), and \(C_{jl}\) equals 1 only when node \(j\) is in cluster \(l\). Therefore, this expression simplifies to the definition of \(O\).
\begin{equation}
\{C^TAC\}_{ql} = \sum_{i=1}^p \sum_{j=1}^p A_{ij}\mathbbm{1}\{y_i=q,y_j=l\}
\end{equation}
This can be viewed as the \emph{cluster-level adjacency matrix}, which quantifies the edges between clusters. Next, we formalize the relationship between \(O\) and \(C^T D C\), where \(D = \text{diag}(d)\) is the degree matrix.
\vspace{-\baselineskip}
\begin{equation}
    \{C^TDC\}_{ql} = \sum_{i=1}^p \sum_{j=1}^p C_{iq}D_{ij}C_{jl} = \sum_{i=1}^p C_{iq}d_i C_{il} \ \ \ [ D_{ij} = 0 \text{ if } i \neq j] \\
\end{equation}
Since the rows of \(C\) are orthonormal, \(C_{iq}C_{il} = 1\) if and only if \(q = l\). Thus, we obtain the following expression for the diagonal entries of \(C^TDC\):
\vspace{-\baselineskip}
\begin{equation}
    \{C^TDC\}_{qq} = \sum_{i=1}^p C_{iq}^2 d_i = \sum_{i=1}^p C_{iq} d_i \ \ \ \bigg[\because C_{ij}\in \{0,1\} \bigg] \\
\end{equation}
Now, we consider $\sum_{l=1}^{k} \{C^TAC\}_{ql}$
\vspace{-\baselineskip}
\begin{equation}
    = \sum_{i=1}^p \sum_{j=1}^p\bigg( C_{iq}A_{ij} \sum_{l=1}^k C_{jl}\bigg)
    = \sum_{i=1}^p \sum_{j=1}^p C_{iq}A_{ij}\cdot 1 = \sum_{i=1}^p C_{iq}d_{i}
\end{equation}
Thus, we showed that $C^TDC = \text{diag}(\sum_{l=1}^k [C^TAC]_k)$, which intuitively represents the \emph{cluster-level degree matrix} because it is the (diagonal of) row/column-sum of the $C^TAC$ or cluster-level adjacency matrix. 
We now express the Laplacian of the coarsened graph as a function of $O$, $\Theta_C(O) = C^T\Theta C = C^T(D - A) C =\text{diag}(\sum_{l=1}^kO_l) - O$. This implies that the terms $\textrm{tr}(X_C^TC^T\Theta CX_C)$, $\textrm{tr}(C^TBC)$, and $\log\det(C^T\Theta C + J)$ in our clustering objective are functions of $\Theta_C(O)$. Using Theorem 4.1 from \cite{zhao2012_consistency}, we conclude that $\mathcal{L}_{\text{MAGC}}$ must be uniquely minimized at any point $(y^*, A^*)$ s.t. $\mathbb{E}_p[\pi(y_p)] = \pi(y^*)$ and $\mathbb{E}_p[A^{(p)}] = A^*$. 
The Laplacian $\Theta$ can be written as a function of the adjacency $\Theta_{ij} = \sum_j A^*_{ij} - A^*_{ij}$ (assuming no self-loops).

For the modularity term, this has been proven in Theorem 3.1 of \citet{zhao2012_consistency}. 
For the constraint relaxation term $\frac{1}{2}\|CX_C - X\|_F^2$, we see that $X_C$ is a function of only $C(y^*)$ and $\Theta(A^*)$ from it's update rule \eqref{eqn:X_Update}, so we get $\text{term}_\alpha(y^*, \Theta) = \frac{1}{2}\| C(y^*)X_C(y^*, A^*) - X\|_F^2$.
$\text{tr}(X_CC^T\Theta CX_C)$ is already just a function of $X_C, C, \Theta$ which we have shown above to be functions of only $y^*, A^*$.
By the same logic, $-\log\det (C^T\Theta C + J)$ is also a function of only $y^*,A^*$ ($J$ is a constant = $\frac{1}{k}\mathbf{1}_{k\times k}$). Thus, $\mathcal{L}_{\text{MAGC}}$ can be written as required above.


Next, we compute the population version of $\mathcal{L}_{MAGC}$.
\begin{align}
F(H(S), h(S)) &= \mathbb{E}[-\mathcal{L}_{MAGC}|C, t] = \underbrace{f_1(H(S), h(S))}_{\text{smoothness}} + \underbrace{f_2(H(S), h(S))}_{\text{log-determinant}} + \underbrace{f_3(H(S), h(S))}_{\text{modularity}}\\
\text{where, } f_1(H(S), h(S)) &= \mu_p \text{tr}(X_C^T [H(S) - \text{diag}(\sum_{j=1}^{k} H(S)_{ij})] X_C) \\
f_2(H(S), h(S)) &= \frac{\mu_p}{b} \text{tr}\bigg(H(S) - \text{diag}(\sum_{j=1}^{k} H(S)_{ij}) \bigg) - \frac{1}{b} + k - k\log b \\
f_3(H(S), h(S)) &= \frac{\text{tr}(H(S))}{\tilde{P_0}} - \frac{\sum_{i=1}^k (\sum_{j=1}^k H(S)_{ij})^2}{\tilde{P_0}^2} \\
\text{and, } \tilde{P_0} &= \sum_{ab} \tilde{\pi}_a \tilde{\pi}_b P_{ab} \quad ; \quad
\Tilde{\pi}_a = \sum_u x_u \Pi_{au} \text{ with } \sum_a \Tilde{\pi}_a = 1, \text{ since } \mathbb{E}[t_i] = 1.
\end{align}
For brevity, we denote $F(H(S), h(S))$ by $F(S)$. Now for the second part of the proof, it is easy to prove that $F(S)$ satisfies the regularity conditions stated in Lemma \ref{lemma:consistency_thm_4.1}. The detailed proof is deferred to Appendix \ref{supp:proof_consistency_conditions}.
This concludes the proof that $\mathcal{L}_{MAGC}$ is strongly and weakly consistent under the DC-SBM.
\end{proof}
We also demonstrate experimental consistency of the optimization objective \eqref{optimization_objective} in Appendix \ref{supp:sbm}, resulting in complete recovery of the labels. This consistency extends to the objective with additional regularization terms, as detailed therein. We want to emphasize that the consistency analysis for Equation \ref{optimization_objective} holds regardless of how the objective is optimized: whether by integrating our loss into Graph Neural Networks (GNNs) or by employing Block Majorization-Minimization techniques.


\section{Integrating with GNNs}

\begin{wrapfigure}{r}{0.5\linewidth}
    \vspace{-5\baselineskip}
    \begin{minipage}{\linewidth}
        \begin{algorithm}[H]
            \caption{Q-GCN Algorithm}
            \begin{algorithmic}[1]
            \Require $G(X, \Theta), \alpha, \beta, \gamma, \lambda$
            \Require $k$ GCN layers - weights $W^{(1..k)}$
            \State $t \gets 0$
            \While{Stopping Criteria not met}
                \State $C^{t+1} \gets \text{GCN}(X, A)$
                \State $X_C^{t+1} \gets C^\dagger X$
                \State $t \gets t + 1$
                \State $\mathcal{L} \gets \mathcal{L}_{\text{MAGC}}(C, X)$
            \EndWhile \\
            \Return $C^t, X_C^t$
            \end{algorithmic}
        \label{alg:q-gcn}
        \end{algorithm}
    \end{minipage}
    \vspace{-\baselineskip}
\end{wrapfigure}

Our optimization framework integrates seamlessly with Graph Neural Networks (GNNs) by incorporating the objective \eqref{optimization_objective} into the loss function. This integration can be minimized using gradient descent. We demonstrate the effectiveness of this approach on several popular GNN architectures, including Graph Convolutional Networks (GCNs) \citep{kipfandwelling17_GCN}, Variational Graph Auto-Encoders (VGAEs) \citep{kipf2016variational}, and a variant known as Gaussian Mixture Model VGAE (GMM-VGAE) \citep{Hui2020_GMM_VGAE}.


We iteratively learn the matrix \( C \) using gradient descent. From \eqref{coarsened_features_equation}, we update \( X_C \) using the relation \( X_C =C^\dagger X\). It is important to note that the loss term \( CX_C - X \) will not necessarily be zero. This arises because we use a "soft" version of \( C \) ($C_{i,j}$ is the probability $\in[0, 1]$) in the loss function to enable gradient flow, while a "hard" version of \( C \) ($C_{i,j}$ is the binary assignment $\in \{0,1\}$) is used in the update step. This method ensures that \( C \) naturally becomes harder and exhibits a higher prediction probability.

\textbf{Q-GCN.} We integrate our loss function (\(\mathcal{L}_{MAGC}\), as defined in \eqref{optimization_objective}) into a simple three-layer Graph Convolutional Network (GCN) model. The soft cluster assignments \(C\) are learned as the output of the final GCN layer. The detailed architecture and loss function are illustrated in Figure \ref{fig:architecture}.


\textbf{Q-VGAE.} 
The VGAE loss can be written as
\(
    \mathcal{L}_{VGAE} = \lambda_{recon}\underbrace{\mathbb{E}_{q(Z|X,A)} [\log p(\hat{A}|Z)]}_{\text{Reconstruction Error}} 
    - \lambda_{kl}\underbrace{\textrm{KL}[q(Z|X,A)\ ||\ p(Z)]}_{\text{Kullback-Leibler divergence}}
\)    

\begin{wrapfigure}{r}{0.5\linewidth}
    \vspace{-2\baselineskip}
    \begin{minipage}{\linewidth}
    \begin{algorithm}[H]
        \caption{Q-VGAE/Q-GMM-VGAE Algorithm\\ $Z$ is the latent space of the VGAE and $\hat{A}$ is the reconstructed adjacency matrix.}
        \begin{algorithmic}[1]
        \Require $G(X, \Theta), \alpha, \beta, \gamma, \lambda$
        \Require Variational Encoder - VarEnc (GCN, $\mu$, $\sigma$)
        \Require Prediction Head - Pred (GCN or GMM)
        \State $t \gets 0$
        \While{Stopping Criteria not met}
            \State $Z \gets \text{VarEnc}_{\mu,\sigma}(X, A)$
            \State $\hat{A} \gets ZZ^T$
            \State $C^{t+1} \gets \text{Pred}(Z, A)$
            \State $X_C^{t+1} \gets C^\dagger X$
            \State $t \gets t + 1$
            \State $\mathcal{L} \gets \mathcal{L}_{\text{MAGC}}(C, X) + \mathcal{L}_{\text{VGAE}}(X, A, Z, \hat{A})$
        \EndWhile \\
        \Return $C^t, X_C^t$
        \end{algorithmic}
    \label{alg:q-vgae}
    \end{algorithm}
    \end{minipage}
    \vspace{-2\baselineskip}
\end{wrapfigure}

where, \(Z\) represents the latent space of the VGAE, \(\hat{A}\) is the reconstructed adjacency matrix, and \(\lambda_{recon}\) and \(\lambda_{kl}\) are hyperparameters. We add a GCN layer on top of this architecture, which takes \(Z\) as input and predicts \(C\). A detailed summary of the VGAE loss terms is provided in Appendix Section \ref{supp:VGAE_eqns}.
For the VGAE, we minimize the sum of three losses: the reconstruction loss, the KL-divergence loss, and our loss. This combined loss function is expressed as:
\(
    \mathcal{L}_{Q-VGAE} = \mathcal{L}_{MAGC} + \mathcal{L}_{VGAE}
\)

\textbf{Q-GMM-VGAE.} 
This variant of VGAE incorporates a Gaussian Mixture Model (GMM) in the latent space to better capture data distributions. This approach is effective because it minimizes the evidence lower bound (ELBO) or variational lower bound \citep{Hui2020_GMM_VGAE, Kingma2014_VAE, kipf2016variational} using multiple priors, rather than a single Gaussian prior as in standard VGAE. \citet{Hui2020_GMM_VGAE} use a number of priors equal to the number of clusters.

\begin{figure*}[t]
    \centering
    \includegraphics[width=\linewidth]{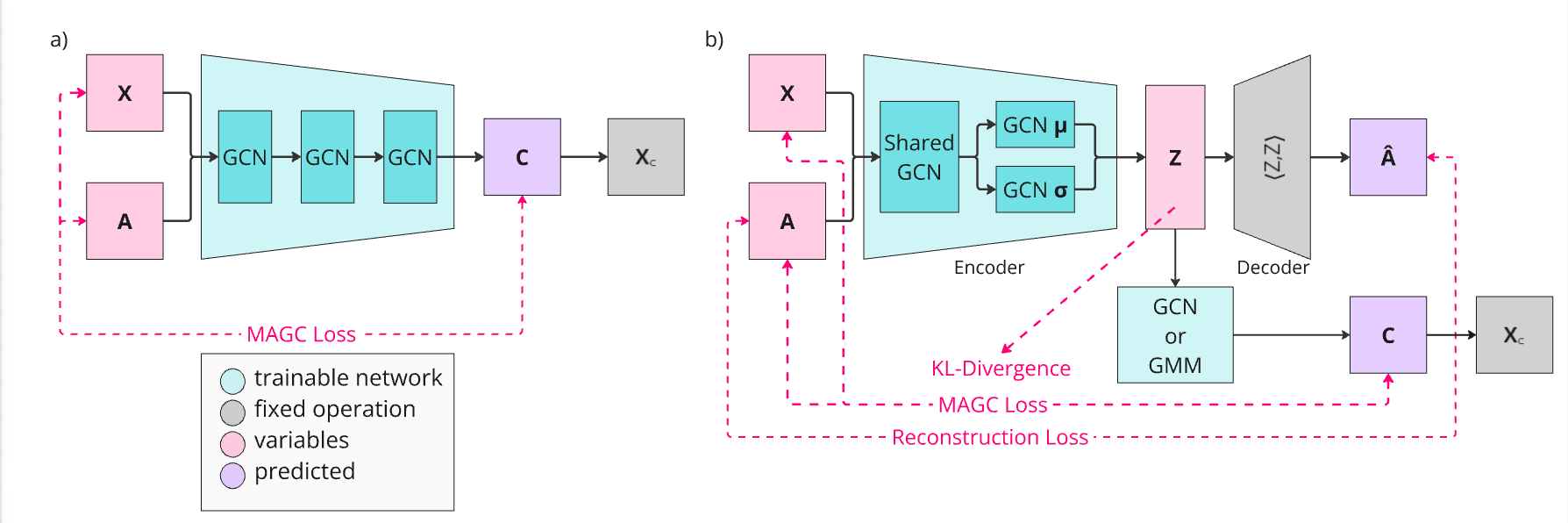}
    \caption{\textbf{a)} \textbf{ Architecture of Q-GCN.} We want to train the encoder to learn the soft cluster assignment matrix \( C \). The coarsened features \( X_C \) are obtained using the relation \( X_C^{t+1} = {C^{t+1}}^\dagger X \). Finally, our proposed MAGC loss is then computed using \( C \) and \( X_C \). \\
    \textbf{b)} \textbf{Architecture of Q-VGAE/Q-GMM-VGAE.} The three-layer GCN encoder takes \( X \) and \( A \) as inputs to learn the latent representation \( Z \) of the graph. \( Z \) is then passed through an inner-product decoder to reconstruct the adjacency matrix \( \hat{A} \). The reconstruction loss is calculated between \( \hat{A} \) and \( A \), and the KL-divergence is applied to \( Z \). In Q-VGAE (or Q-GMM-VGAE), \( Z \) is also passed through a GCN layer (or GMM) to output the soft cluster assignments \( C \). The MAGC loss is then computed in the same manner as in Q-GCN.} 
    \label{fig:architecture}
    \vspace{-\baselineskip}
\end{figure*}

\section{Experiments}
\label{experiments}

\subsection{Benchmark Datasets and Baselines}
We evaluate our method on a diverse set of datasets, including small attributed datasets like Cora and CiteSeer, larger datasets like PubMed, and unattributed datasets such as Airports (Brazil, Europe, and USA). Additionally, we test our method on very large datasets like CoauthorCS/Physics, AmazonPhoto/PC, and ogbn-arxiv. A detailed summary of all the datasets used is provided in Appendix \ref{supp:datasets}.

To assess the performance of our method, we compare it against three types of state-of-the-art methods based on the input and architecture type: methods that use only node attributes, methods that use only graph structure, and methods that use both graph structure and node attributes. The last category is further subdivided into graph coarsening methods, GCN-based architectures, VGAE-based architectures and contrastive methods, and heavily modified VGAE architectures. This comprehensive evaluation allows us to demonstrate the robustness and versatility of our approach across various data and model configurations.

\subsection{Metrics}

To evaluate the performance of our method, we utilize label alignment metrics that compare ground truth node labels with cluster assignments. Specifically, we measure Normalised Mutual Information (NMI), Adjusted Rand Index (ARI), and Accuracy (ACC), with higher values indicating superior performance. For detailed explanations of these metrics, please refer to Appendix \ref{supp:datasets}.

We selected Normalized Mutual Information (NMI) as the primary metric for evaluating model performance based on its prominence in graph clustering literature and its comprehensive ability to assess the quality of our cluster assignments.

\textbf{Training Details.}
Training details are available in the Appendix \ref{supp:training_details}.

\subsection{Attributed Graph Clustering}

We present our key results on the real datasets Cora, CiteSeer, and PubMed in Table \ref{table:attributed_results}. Our proposed method outperforms all existing methods in terms of NMI and demonstrates competitive performance in Accuracy and ARI. The best models were selected based on NMI scores. Results for very large datasets are provided in Appendix \ref{supp:large_dataset_results}. Unlike some methods such as S3GC \citep{s3gc_devvrit2022}, which use randomly-sampled batches that can introduce bias by breaking community structure, we perform full-batch training by passing the entire graph. For extremely large graphs, such as ogbn-arxiv, we have also utilized batching.

\begin{table*}[t!]
    \centering
    \begin{adjustbox}{width=\linewidth}
    \begin{tabular}{lccc@{\hskip 20pt}ccc@{\hskip 20pt}ccc}
    \toprule[1.5pt]
    & \multicolumn{3}{c}{Cora} & \multicolumn{3}{c}{CiteSeer} & \multicolumn{3}{c}{PubMed} \\
    \cmidrule(r){2-4} \cmidrule(r){5-7} \cmidrule(r){8-10}
    \textbf{Method} & \textbf{ACC $\uparrow$} & \textbf{NMI $\uparrow$} & \textbf{ARI $\uparrow$} & \textbf{ACC $\uparrow$} & \textbf{NMI $\uparrow$} & \textbf{ARI $\uparrow$}  & \textbf{ACC $\uparrow$} & \textbf{NMI $\uparrow$} & \textbf{ARI $\uparrow$} \\ 
    \midrule[1.5pt]
        K-means & 34.7 & 16.7 & 25.4 & 38.5 & 17.1 & 30.5 & 57.3 & 29.1 & 57.4 \\
        \midrule[1.2pt]
        Spectral Clustering & 34.2 & 19.5 & 30.2 & 25.9 & 11.8 & 29.5 & 39.7 & 3.5 & 52.0 \\
        DeepWalk \citep{Deepwalk_Perozzi_2014} & 46.7 & 31.8 & 38.1 & 36.2 & 9.7 & 26.7 & 61.9 & 16.7 & 47.1 \\ 
        Louvain \citep{Louvain_Blondel_2008} & 52.4 & 42.7 & 24.0 & 49.9 & 24.7 & 9.2 & 30.4 & 20.0 & 10.3 \\ 
        \midrule[1.2pt]
        GAE [NeurIPS'16] \citep{kipf2016variational} & 61.3 & 44.4 & 38.1 & 48.2 & 22.7 & 19.2 & 63.2 & 24.9 & 24.6 \\
        DGI [ICLR'19] \citep{velickovic2018deepgraphinfomax} & 71.3 & 56.4 & 51.1 & 68.8 & 44.4 & 45.0 & 53.3 & 18.1 & 16.6 \\
        GIC [PAKDD'21] \citep{Mavromatis2021GIC_PAKDD} & 72.5 & 53.7 & 50.8 & 69.6 & 45.3 & 46.5 & 67.3 & 31.9 & \textbf{29.1} \\ 
        DAEGC [IJCAI'19] \citep{DAEGC_IJCAI} & 70.4 & 52.8 & 49.6 & 67.2 & 39.7 & 41.0 & 67.1 & 26.6 & 27.8 \\
        GALA [ICCV'19] \citep{gala_iccv19} & \textbf{74.5} & \underline{57.6} & \underline{53.1} & 69.3 & 44.1 & 44.6 & 69.3 & \underline{32.1} & 32.1 \\
        AGE [KDD'20] \citep{AGE2020} & \underline{73.5} & 57.5 & 50.0 & 69.7 & 44.9 & 34.1 & \textbf{71.1} & 31.6 & 33.4 \\
        DCRN [AAAI'22] \citep{DCRN} & 61.9 & 45.1 & 33.1 & \underline{70.8} & \underline{45.8} & \underline{47.6} & \underline{69.8} & 32.2 & 31.4 \\
        FGC [JMLR'23] \citep{Manoj2023FGC} & 53.8 & 23.2 & 20.5 & 54.2 & 31.1 & 28.2 & 67.1 & 26.6 & 27.8 \\
        \textbf{Q-MAGC (Ours)} & 65.8 & 51.8 & 42.0 & 65.9 & 40.8 & 40.1 & 66.7 & \textbf{32.8} & \underline{27.9} \\
        \textbf{Q-GCN (Ours)} & 71.6 & \textbf{58.3} & \textbf{53.6} & \textbf{71.5} & \textbf{47.0} & \textbf{49.1} & 64.1 & \underline{32.1} & 26.5 \\ 
        \midrule
        SCGC [IEEE TNNLS'23] \citep{SCGC_2023} & \textbf{73.8} & 56.1 & 51.7 & \textbf{71.0} & \underline{45.2} & 46.2 & - & - & - \\
        MVGRL [ICML'20] \citep{mvgrl_icml2020} & \underline{73.2} & \underline{56.2} & 51.9 & \underline{68.1} & 43.2 & 43.4 & \underline{69.3} & \textbf{34.4} & \textbf{32.3} \\
        VGAE [NeurIPS'16] \citep{kipf2016variational} & 64.7 & 43.4 & 37.5 & 51.9 & 24.9 & 23.8 & \textbf{69.6} & 28.6 & \underline{31.7}\\
        ARGA [IJCAI'18] \citep{arva_agrva_ijcai2018p362} & 64.0 & 35.2 & \underline{61.9} & 57.3 & 34.1 & \textbf{54.6} & 59.1 & 23.2 & 29.1 \\
        ARVGA [IJCAI'18] \citep{arva_agrva_ijcai2018p362} & 63.8 & 37.4 & \textbf{62.7} & 54.4 & 24.5 & \underline{52.9} & 58.2 & 20.6 & 22.5 \\ 
        R-VGAE [IEEE TKDE'22] \citep{Mrabah2022RethinkingVGAE} & 71.3 & 49.8 & 48.0 & 44.9 & 19.9 & 12.5 & 69.2 & 30.3 & 30.9\\
        \textbf{Q-VGAE (Ours)} & 72.7 & \textbf{58.6} & 49.6 & 66.1 & \textbf{47.4} & 50.2 & 64.3 & \underline{32.6} & 28.0 \\ 
        \midrule
        VGAECD-OPT [Entropy'20] \citep{vgaecd-opt} & 27.2 & 37.3 & 22.0 & 51.8 & 25.1 & 15.5 & 32.2 & 25.0 & 26.1 \\
        Mod-Aware VGAE [NN'22] \citep{salhagalvan2022modularityawaregae} & 67.1 & 52.4 & 44.8 & 51.8 & 25.1 & 15.5 & - & 30.0 & 29.1 \\
        GMM-VGAE [AAAI'20] \citep{Hui2020_GMM_VGAE} & 71.9 & 53.3 & 48.2 & 67.5 & 40.7 & 42.4 & \underline{71.1} & 29.9 & 33.0 \\
        R-GMM-VGAE [IEEE TKDE'22] \citep{Mrabah2022RethinkingVGAE} & \textbf{76.7} & \underline{57.3} & \textbf{57.9} & \underline{68.9} & \underline{42.0} & \underline{43.9} & \textbf{74.0} & \underline{33.4} & \textbf{37.9} \\
        \textbf{Q-GMM-VGAE (Ours)} & \underline{76.2} & \textbf{58.7} & \underline{56.3} & \textbf{72.7} & \textbf{47.4} & \textbf{48.8} & 69.0 & \textbf{34.8} & \underline{34.0} \\
    \bottomrule[1.5pt]
    \end{tabular}
    \end{adjustbox}
    \caption{\textbf{Comparison of all methods on attributed datasets.} We classify the baselines into three primary groups: the first includes traditional clustering algorithms and GNN-free methods, serving as relevant baselines for our Q-MAGC method. The second category compares GNN-based methods with Q-GCN, and the last, comprising the most performant methods, consists of VGAE-based baselines for Q-VGAE and Q-GMM-VGAE.}
    \label{table:attributed_results}
    \vspace{-\baselineskip}
\end{table*}

\subsection{Non-Attributed Graph Clustering}

For non-attributed graphs, we use a one-hot encoding of the degree vector as features. While this is a basic approach compared to learning-based methods like DeepWalk \citep{perozzi2014deepwalk} and node2vec \citep{grover2016node2vec}, it ensures a fair comparison since other methods also use this feature representation. Our results in Table \ref{table:non_attributed_results} demonstrate that our method achieves competitive or superior performance in terms of NMI, even for non-attributed datasets.

\subsection{Ablation Studies}

\textbf{Comparison of running times}
In Fig \ref{fig:time_comparison}, we compare the running times of our method with other baselines. Our method consistently takes less than half the time across all datasets. Notably, on PubMed (large dataset), state-of-the-art methods GMM-VGAE and R-GMM-VGAE (unmodified) require approximately 60 minutes for clustering, whereas our Q-GMM-VGAE delivers superior performance in under 15 minutes, representing a $75\%$ reduction. Additionally, Q-MAGC runs even faster, completing in just 6 minutes on PubMed, and achieves approximately $90\%$ of the performance.

\begin{figure*}[t!]
    \centering

    \begin{subtable}[c]{0.59\linewidth}
        \centering
        \begin{adjustbox}{width=\linewidth}
        \begin{tabular}{lccc@{\hskip 20pt}ccc@{\hskip 20pt}ccc}
        \toprule[1.5pt]
        & \multicolumn{3}{c}{Brazil} & \multicolumn{3}{c}{Europe} & \multicolumn{3}{c}{USA} \\
        \cmidrule(r){2-4} \cmidrule(r){5-7} \cmidrule(r){8-10}
        \textbf{Method} & \textbf{ACC $\uparrow$} & \textbf{NMI $\uparrow$} & \textbf{ARI $\uparrow$} & \textbf{ACC $\uparrow$} & \textbf{NMI $\uparrow$} & \textbf{ARI $\uparrow$}  & \textbf{ACC $\uparrow$} & \textbf{NMI $\uparrow$} & \textbf{ARI $\uparrow$} \\
        \midrule[1.2pt]
            GAE [NeurIPS'16] & 62.6 & 37.8 & 30.8 & 47.6 & 19.9 & 12.7 & 43.9 & 13.6 & 11.8 \\
            DGI [ICLR'19] & 64.9 & 31.0 & 30.4 & 48.6 & 16.1 & 12.3 & \textbf{52.2} & 22.9 & \textbf{21.7} \\
            GIC [PAKDD'21] & 40.5 & 23.5 & 14.1 & 40.4 & 9.4 & 6.2 & 49.7 & 22.1 & \underline{19.9} \\ 
            DAEGC [AAAI'19] & \underline{71.0} & \textbf{47.4} & 41.2 & \underline{53.6} & 30.9 & 23.3 & 46.4 & \textbf{27.2} & 18.4\\ 
            \textbf{Q-GCN (Ours)} & 51.1 & 31.9 & 23.7 & 45.5 & 30.8 & \underline{25.1} & 43.8 & 19.1 & 14.8 \\ 
            \midrule
            VGAE [NeurIPS'16] & 64.1 & 38.0 & 30.7 & 49.9 & 23.5 & 16.7 & 45.8 & \underline{23.1} & 15.7\\
            \textbf{Q-VGAE (Ours)} & 50.1 & 35.0 & 19.8 & 46.6 & 19.5 & 17.5 & 46.2 & 19.5 & 16.9 \\ 
            \midrule
            GMM-VGAE [AAAI'20] & 70.2 & \underline{46.0} & 41.9 & 53.1 & 31.1 & 24.4 & 48.1 & 21.9 & 13.2 \\
            R-GMM-VGAE [IEEE TKDE'22] & \textbf{73.3} & 45.6 & \textbf{42.5} & \textbf{57.4} & \underline{31.4} & \textbf{25.8} & \underline{50.8} & \underline{23.1} & 15.3 \\
            \textbf{Q-GMM-VGAE (Ours)} & 68.4 & \underline{46.0} & \underline{42.4} & 47.9 & \textbf{32.2} & 23.5 & 46.6 & \underline{23.1} & 13.1 \\
        \bottomrule[1.2pt]
        \end{tabular}
        \end{adjustbox}
        \caption{Comparison of all methods on non-attributed datasets.}
        \label{table:non_attributed_results}
    \end{subtable}
    \hfill
    \begin{subfigure}[c]{0.4\linewidth}
        \includegraphics[width=\linewidth]{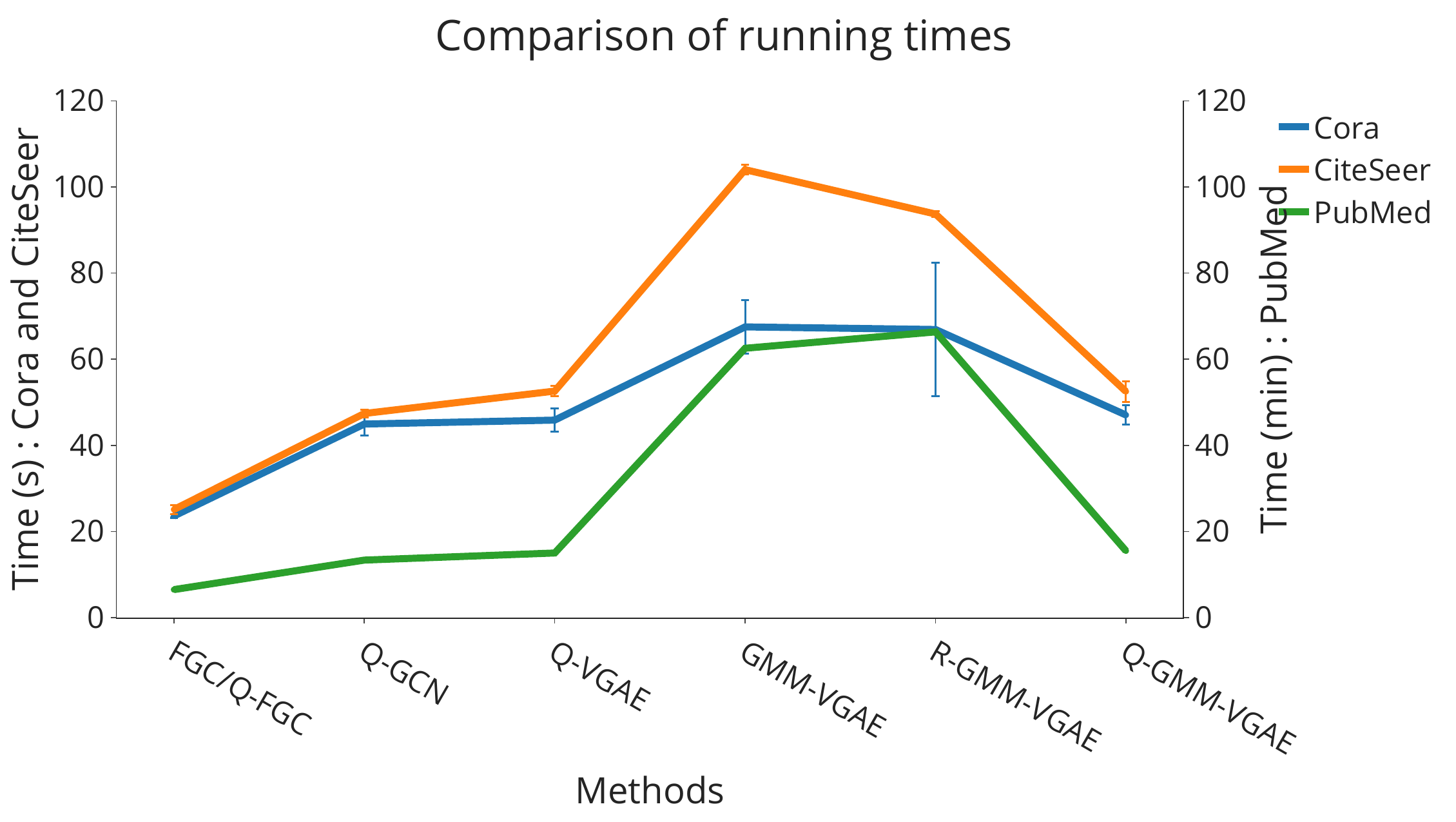}
        \caption{Comparison of running times of methods.
        scale for PubMed (right axis): mins, scale for others (left axis) : secs.}
        \label{fig:time_comparison}
    \end{subfigure}
\end{figure*}

\textbf{Modularity Metric Comparison}
We treat modularity as a metric and measure the gains observed in modularity over other baselines on the Cora, CiteSeer, and PubMed datasets. We report two types of graph-based metrics: modularity \(\mathcal{Q}\) and conductance \(\mathcal{C}\), which do not require labels. Conductance measures the fraction of total edge volume pointing outside the cluster, with \(\mathcal{C}\) being the average conductance across all clusters, where a lower value is preferred.

From Table \ref{table:modularity}, we observe that although DMoN \citep{DMoN_JMLR_2023} achieves the highest modularity, our method attains significantly higher NMI. For CiteSeer, we achieve a $40\%$ improvement in NMI with only an $8\%$ decrease in modularity, positioning us closer to the ground truth. Additionally, our methods outperform their foundational counterparts, with Q-MAGC outperforming FGC, and Q-VGAE outperforming VGAE.
Although modularity is a valuable metric to optimize, the maximum modularity labeling of a graph does not always correspond to the ground truth labeling. Therefore, it is crucial to include the other terms in our formulation. While optimizing modularity helps approach the optimal model parameters (where NMI would be 1), it can deviate slightly. The additional terms in our formulation correct this trajectory, ensuring more accurate results.

\begin{figure}[t!]
    \centering
    \includegraphics[width=\linewidth]{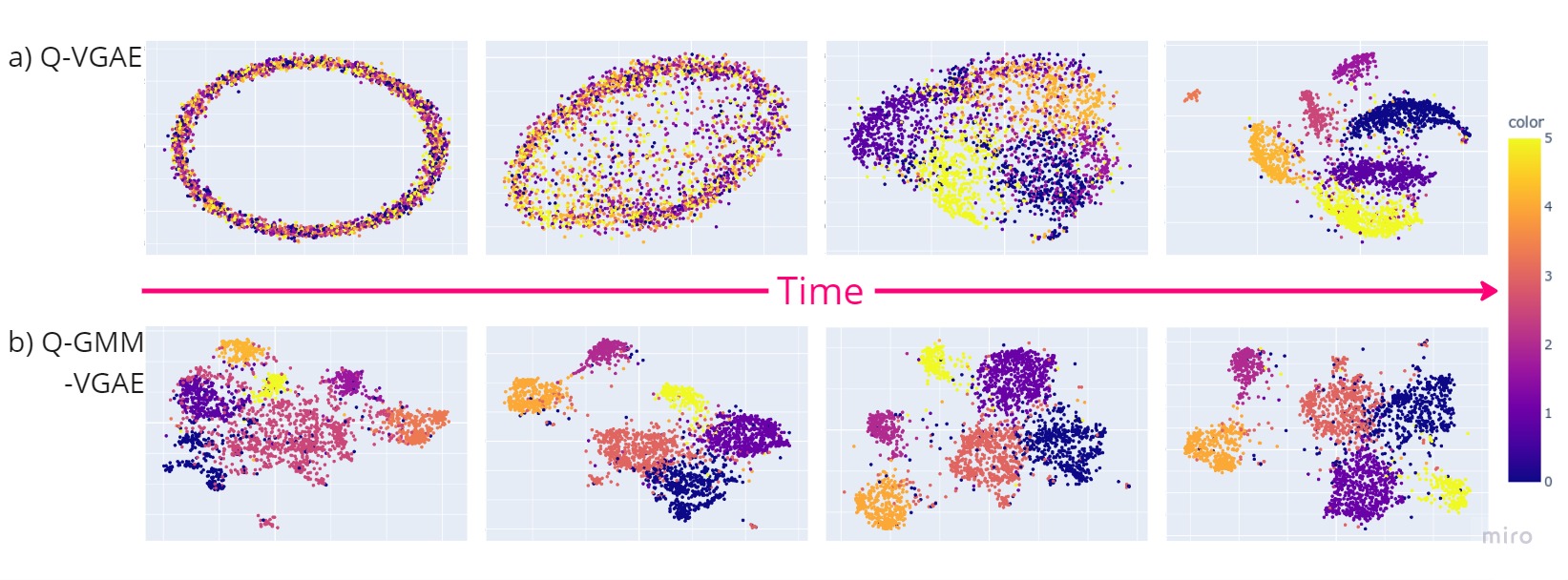}
    \caption{Evolution of the latent space of a) Q-VGAE and b) Q-GMM-VGAE over time for Cora. Colors represent cluster assignments.}
    \label{fig:cora_latent}
\end{figure}

{
\setlength\textfloatsep{0pt}
\setlength\floatsep{-10pt}
\setlength\intextsep{0pt}

\begin{figure*}
    \centering
     \begin{subtable}[c]{0.49\linewidth}
        \centering
        \begin{adjustbox}{width=\linewidth}
        \begin{tabular}{lrrrrrrrrr}
        \toprule
        ~ & \multicolumn{3}{c}{Cora} & \multicolumn{3}{c}{CiteSeer} & \multicolumn{3}{c}{PubMed}  \\ 
        \cmidrule(r){2-4} \cmidrule(r){5-7} \cmidrule(r){8-10}
        ~ & $\mathcal{C} \downarrow$ & $\mathcal{Q} \uparrow$ & NMI $\uparrow$ & $\mathcal{C} \downarrow$ & $\mathcal{Q} \uparrow$ & NMI $\uparrow$ & $\mathcal{C} \downarrow$ & $\mathcal{Q} \uparrow$ & NMI $\uparrow$ \\ 
        \midrule
            DMoN & 12.2 & \textbf{76.5} & 48.8 & 5.1 & \textbf{79.3} & 33.7 & 17.7 & \textbf{65.4} & 29.8  \\ 
            FGC & 58.4 & 25 & 23.1 & 41.6 & 41.1 & 31 & 21.6 & 44.1 & 20.5  \\ 
            Q-MAGC & 13.3 & 72.5 & 51.7 & 16.8 & 64.9 & 40.16 & 26 & 40.3 & 28.1  \\ 
            Q-GCN & 13.6 & 73.3 & 58.3 & 5.8 & 74.5 & 46.7 & \textbf{8.27} & 55 & 31.5  \\ 
            VGAE & 17.6 & 60.8 & 38.1 & 12.8 & 55.8 & 21 & 13.5 & 45.8 & 26.9  \\ 
            Q-VGAE & \textbf{9.5} & 71.5 & \textbf{58.4} & \textbf{4.6 }& 72.4 & \textbf{47.3} & 9.4 & 52.12 & \textbf{31.8} \\ 
        \bottomrule
        \end{tabular}
        \end{adjustbox}
        \caption{Comparison of modularity and conductance at the best NMI with DMoN. Note that DMoN is optimizing only modularity, whereas we are optimizing other important terms as well, as mentioned in Eqn \ref{optimization_objective}, and thus gain a lot on NMI by giving up a small amount of modularity, making us closer to the ground truth. }
        \label{table:modularity}
    \end{subtable}
    \hfill
    \begin{subfigure}[c]{0.49\linewidth}
        \centering
        \includegraphics[width=0.8\linewidth]{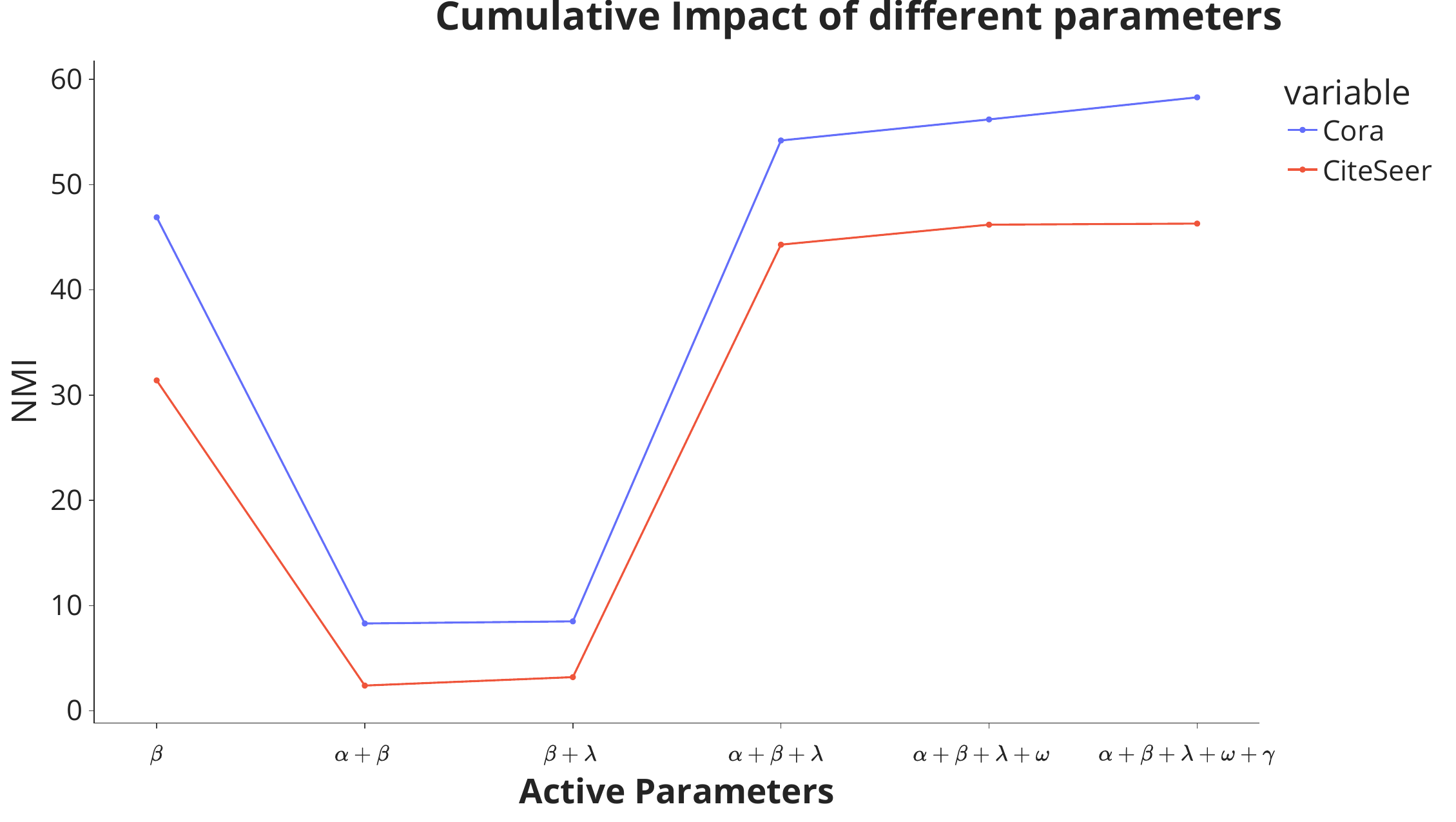}
        \caption{Impact of active parameters on clustering performance. 
        All terms in the loss \eqref{optimization_objective} are represented by their parameters, for example the modularity term is represented by $\beta$. Additionally, $\omega$ represents the non-parameterized term.}
        \label{fig:loss_term_impact}
    \end{subfigure}
\vspace{-\baselineskip}
\end{figure*}
}

\textbf{Importance of and Evolution of different loss terms}
We analyze the evolution of the different loss terms during training, and also try to measure the impact of each term separately by removing terms from the loss one by one, as shown in Appendix \ref{supp:loss_evol}.
Also, we found that $||CX_C - X||_F^2$ is the most sensitive to change in its weight $\alpha$, followed by the terms related to $\gamma$, $\beta$ and then $\lambda$. This makes sense because if that constraint(relaxation) is not being met, then $C$ would have errors. \
Even though some of the terms do the heavy lifting, the other regularization terms do contribute to performance and more importantly, change the nature of $C$ : The smoothness term corresponds to smoothness of signals in the graph being transferred to the coarsened graph which encourages local "patches"/groups of $C$ to belong to the same cluster. The term $\gamma$ ensures that the coarsened graph is connected - i.e. preserving inter-cluster relations, which simple contrastive methods destroy; this affects $C$ by making it so that $\Theta_C$ has minimal multiplicity of 0-eigenvalues.

\textbf{Visualization of the latent space}
In Figure \ref{fig:cora_latent}, we visualize how the latent space of the Q-VGAE and Q-GMM-VGAE changes over time for the Cora dataset. Plots for the rest of the datasets can be found in the Appendix \ref{supp:latent_plots}.
We use UMAP (Uniform Manifold Approximation and Projection) \citep{UMAPMcInnes2018} for dimensionality reduction. 



\section{Discussion}
\label{discussion}
\textbf{Performance of.}
Our methods consistently outperform their counterparts. Q-MAGC demonstrates significant superiority over traditional algorithms and GNN-free approaches. These can be attributed to the inclusion of modularity in our objective, as well as the smoothness and connectedness terms, making Q-MAGC more effective than previous methods.

To compete with GNN-based methods, we leverage their powerful message-passing mechanisms. Q-GCN surpasses other GCN/simple GNN-based methods, such as AGE and DCRN, not only because it optimizes modularity but also due to its focus on smoothness and inter-cluster relationships in the coarsened graph. Similarly, Q-VGAE achieves better results than other VGAE/complex GNN-based methods for the same reasons.

\textbf{Efficiency.}
Our methods are efficient in both complexity and implementation, as demonstrated in the Complexity Analysis \ref{alg:q-magc} and the Ablation - Comparison of running times.

\textbf{Limitations.} If the modularity of a graph, calculated from the ground truth labeling, is low, our method may not perform optimally and can be slower to converge since maximizing modularity in such cases is not ideal. However, our method still manages to match or surpass state-of-the-art methods on the Airports dataset, where all graphs exhibit low modularity based on ground truth labels. This is due to the other terms in our weighted loss function, which become the primary contributors to performance.

\bibliography{main}
\bibliographystyle{tmlr}

\newpage
\appendix

\textbf{\Large Appendix}

\startcontents[sections]
\printcontents[sections]{l}{1}{\setcounter{tocdepth}{2}}

\newpage
\section{Convexity of terms in the optimization objective \ref{optimization_objective}}
\label{supp:convexity_proof}
When $X_C$ is kept constant, $\mathcal{L}_{MAGC}$ gets reduced to: 
\begin{align}
    \underset{C}{\min}\;  f(C) = \ \textrm{tr}(X_C^TC^T\Theta CX_C) + & \frac{\alpha}{2}\norm{CX_C -X}_F^2 - \frac{\beta}{2e}\textrm{tr}(C^TBC)\\
    &- \gamma \log\det(C^T\Theta C + J) + \frac{\lambda}{2}\norm{C^T}_{1,2}^2\nonumber\\
    \text{subject}\;\;\text{to}\ \mathcal{C} \in \mathcal{S}_c \ref{coarsened_features_equation} \nonumber\; \text{where, } J = \frac{1}{k} \textbf{1}_{k\times k} \\
    \label{sub_problem_C}
\end{align}
The term $\textrm{tr}(X_C^TC^T\Theta CX_C)$ is convex function in $C$. This result can be derived easily using Cholesky Decomposition on the positive semi-definite matrix $\Theta$ (i.e. $\Theta = L^TL$):
\begin{align}
    \textrm{tr}(X_C^TC^T\Theta CX_C) = \textrm{tr}(X_C^TC^TL^TLCX_C) = \textrm{tr}(({LCX_C})^TLCX_C) = \norm{LCX_C}_F^2
\end{align}
Frobenius norm is a convex function, and the simplified expression is linear in C. Hence we can deduce that the tr(.) term is convex in C.
The terms $\norm{CX_C -X}_F^2$ and $\norm{C^T}_{1,2}^2$ are convex because Frobenius norm and $l_{1,2}$ norm are convex in C. 


For proving the convexity of $-\log\det(C^T\Theta C + J)$ we restrict function to a line. We define a function $g$:
\begin{align}
    g(t) = f(z + tu) \text{where, } t \in dom(g), z \in dom(f), u \in \mathbb{R}^n.
\end{align}
A function $f: \mathbb{R}^n \rightarrow \mathbb{R}$ is convex if $g: \mathbb{R} \rightarrow \mathbb{R}$ is convex. 

The graph Laplacian matrix of the coarsened graph ($\Theta_c$) is symmetric and positive semi-definite having a rank of k-1. To convert $\Theta_c$ to positive definite matrix, we add a rank 1 matrix $J = \frac{1}{k} \textbf{1}_{k\times k}$. ($\Theta_c + J = L^TL$)

\begin{align}
    f(L) = -\log\det(C^T\Theta C + J) = -\log\det(L^TL) 
\end{align}
Now substituting L = Z + tU in the above equation.
\begin{align}
    g(t) &= -\log\det((Z + tU)^T (Z + tU)) \\
    &= -\log\det(Z^TZ + t(Z^TU + U^TZ) t^2U^TU) \\
    &= -\log\det(Z^T(I + t(UZ^{-1} + {(UZ^{-1})}^T) + t^2{(Z^{-1})}^TU^TUZ^{-1})Z) \\
    \text{substituting } \ P &= VZ^{-1} \\
    &= -(\log \det(Z^TZ) + \log\det(I + t(P+P^T) + t^2P^TP)) \\
    \text{Eigenvalue decomposition of} P &= Q\Lambda Q^T \text{and} QQ^T = I \\
    &= -(\log \det(Z^TZ) + \log\det (QQ^T + 2tQ\Lambda Q^T + t^2Q\Lambda^2Q^T)) \\
    &= -(\log \det(Z^TZ) + \log\det(Q(I + 2t\Lambda + t^2\Lambda^2)Q^T)) \\
    &= -\log\det(Z^TZ) -\sum_{i=1}^{n}\log(1 + 2t\lambda_i + t^2\lambda^2) 
\end{align}
Finding double derivative of $g(t)$:
\begin{align}
    g"(t) = \sum_{i=1}^{n} \frac{2\lambda_i^2(1 + t\lambda_i)^2}{(1 + 2t\lambda_i + t^2\lambda_i^2)^2}
\end{align}
Since $g"(t) \geq 0 \forall \ t\in\mathbb{R}$, $g(t)$ is a convex function in $t$. This implies $f(L)$ is convex in $L$. We know that, $C^T\Theta C + J = L^TL$ so,
\begin{align}
    L = \Theta^{\frac{1}{2}}C + \frac{1}{\sqrt{kp}}\textbf{1}_{p \times k} 
\end{align}
Since $L$ is linear in $C$ and $f(L)$ is convex in $L$, $-\log\det(C^T\Theta C + J)$ is convex in C.

\section{Optimal Solution of Optimization Objective in Equation \ref{problem in C}}
\label{supp:optimal_soln}
We first show that the function $f(C)$ is $L-Lipschitz$ continuous gradient function with $L = \max(L_1,L_2,L_3,L_4,L_5)$, where $L_1,L_2,L_3,L_4, and L_5$ are the Lipschitz constants of $\textrm{tr}(X_C^TC^T\Theta CX_C), \frac{\alpha}{2}\norm{CX_C -X}_F^2, -\frac{\beta}{2e}\textrm{tr}(C^TBC), - \gamma \log\det(C^T\Theta C + J), and \frac{\lambda}{2}\norm{C^T}_{1,2}^2$. 

For the $\textrm{tr}(X_C^TC^T\Theta CX_C)$ term, we apply triangle inequality and employ the property of the norm of the trace operator: $||tr|| = \underset{M \neq 0}{sup} \frac{|tr(M)|}{||M||_F}$. 
\begin{align}
    &|tr(X_C^TC_1^T\Theta C_1X_C) - tr(X_C^TC_2^T\Theta C_2X_C)| \\ 
    &= |tr(X_C^TC_1^T\Theta C_1X_C) - tr(X_C^TC_2^T\Theta C_1X_C) + tr(X_C^TC_2^T\Theta C_1X_C) -tr(X_C^TC_2^T\Theta C_2X_C)| \\
    &\leq |tr(X_C^TC_1^T\Theta C_1X_C) - tr(X_C^TC_2^T\Theta C_1X_C)| + |tr(X_C^TC_2^T\Theta C_1X_C) -tr(X_C^TC_2^T\Theta C_2X_C)| \\
    &\leq ||tr|| ||X_C^T(C_1-C_2)^T\Theta C_1X_C||_F + ||tr|| ||X_C^TC_2^T\Theta (C_1-C_2)X_C||_F \\
    &\leq ||tr|| ||X_C||_F ||\Theta||||C_1-C_2||_F (||C_1||_F + ||C_2||_F) \ \ \text{(Frobenius Norm Property)} \\
    &\leq 2 \sqrt{p} ||tr|| ||X_C||_F ||\Theta||||C_1-C_2||_F \ \ (||C_1||_F = ||C_2||_F = \sqrt{p}) \\
    &\leq L_1 ||C_1-C_2||_F
\end{align}

The second term is $\frac{\alpha}{2}\norm{CX_C -X}_F^2$ can be written as:
\begin{align}
    &\frac{\alpha}{2} tr((CX_C -X)^T(CX_C -X)) \\
    &= \frac{\alpha}{2} tr(X_C^TC^TCX_C - X^TCX_C + X^TX - X_C^TC^TX) \\
    &= \frac{\alpha}{2} (tr(X_C^TC^TCX_C) - tr(X^TCX_C) + tr(X^TX) - tr(X_C^TC^TX))
\end{align}
All the terms except $tr(X^TX)$ (constant with respect to C) in obtained in the expression will follow similar proofs to $\textrm{tr}(X_C^TC^T\Theta CX_C)$. 

Next we consider the modularity term:
\begin{align}
    &|tr(C_1^TBC_1) - tr(C_2^TBC_2)| \\
    &= |tr(C_1^TBC_1) - tr(C_2^TBC_1) + tr(C_2^TBC_1) - tr(C_2^TBC_2)| \\
    &\leq  |tr(C_1^TBC_1) - tr(C_2^TBC_1)| + |tr(C_2^TBC_1) - tr(C_2^TBC_2)| \\
    &\leq ||tr||||(C_1-C_2)^TBC_1||_F + ||tr||||(C_1-C_2)^TBC_2||_F \\
    &\leq ||tr||||B||||C_1-C_2||_F (||C_1||_F + ||C_2||_F) \ \ \text{(Frobenius Norm Property)} \\
    &\leq L_3 ||C_1-C_2||_F   
\end{align}
The Lipschitz constant for $- \gamma \log\det(C^T\Theta C + J)$ is linked to the smallest non-zero eigenvalue of the coarsened Laplacian matrix ($\Theta_c$) and is bounded by $\frac{\delta}{(k-1)^2}$ \citep{log_det_bound_sandeep_kumar}, where $\delta$ is the minimum non-zero weight of $G_c$.

$tr(\textbf{1}^TC^TC\textbf{1})$ can be proved to be $L_5-Lipschitz$ like the modularity and Dirichlet energy (smoothness) terms. This concludes the proof.

The majorized problem for L-Lipschitz and differentiable functions can now be applied. 
The Lagrangian of the majorized problem, \ref{majorized_objective} is:

\begin{align}
    \mathcal{L}(C, X_C, \mu) = \frac{1}{2}C^TC - C^TA - \mu_1^TC + \mu_2^T\Big[\norm{C_1^T}_2^2 -1,\cdots, \norm{C_i^T}_2^2 -1,\cdots, \norm{C_p^T}_2^2 -1  \Big]^T
\end{align}
where $\mu = \mu_1||\mu_2$ are the dual variables and $A =  \Big(C - \frac{1}{L}\nabla f(C) \Big)^{+}$

The corresponding KKT conditions (w.r.t $C$) are:
\begin{align}
    C - A - \mu_1 + 2[{\mu_2}_oC_0^T, \cdots, {\mu_2}_iC_i^T, \cdots, {\mu_2}_pC_p^T] &= 0\\
    \mu_2^T\Big[\norm{C_1^T}_2^2 -1,\cdots, \norm{C_i^T}_2^2 -1,\cdots, \norm{C_p^T}_2^2 -1,  \Big]^T &= 0 \\
    \mu_1^TC &= 0 \\
    \mu_1 & \geq 0 \\
    \mu_2 & \geq 0
    C &\geq 0 \\
    \norm{[C^T]_i}_2^2 &\leq 1\ \forall i
\end{align}

The optimal solution to these KKT conditions is:
\begin{align}
    C = \frac{(A)^{+}}{\sum_i\norm{[A^T]_i}}_2
\end{align}

\section{Proof of Theorem \ref{theorem:convergence} (Convergence)}
\label{supp:convergence}
In this section, we prove that the sequence \( \{C^{t+1}, X_C^{t+1}\} \) generated by Algorithm \ref{alg:q-magc} converges to the set of Karush–Kuhn–Tucker (KKT) optimality points for Problem \ref{optimization_objective}.

The Lagrangian of Problem \ref{optimization_objective} comes out to be:
\begin{align}
    \mathcal{L}(C, X_C, \mathbf{\mu}) &= \textrm{tr}(X_C^TC^T\Theta CX_C) +  \frac{\alpha}{2}\norm{CX_C -X}_F^2 - \frac{\beta}{2e}\textrm{tr}(C^TBC)\\
    &\ \ - \gamma \log\det(C^T\Theta C + J) + \frac{\lambda}{2}\norm{C^T}_{1,2}^2 - \mathbf{\mu}_1^TC + \sum_i {\mathbf{\mu}_2}_i\Big[\norm{C^T_i}_2^2 - 1\Big]
\end{align}
where $\mu = \mu_1||\mu_2$ are the dual variables.

\textbf{w.r.t. $C$}, the KKT conditions are
\begin{align}
    2\Theta CX_CX_C^T + \alpha(CX_C - X)X_C^T - \frac{\beta}{e}BC - 2\gamma\Theta C(C^T\Theta C + J)^{-1}&\\ + \lambda C\mathbf{1}_{k\times k} - \mu_1 + 2[{\mu_2}_oC_0^T, \cdots, {\mu_2}_iC_i^T, \cdots, {\mu_2}_pC_p^T] &= 0 \nonumber\\
    \mu_2^T\Big[\norm{C_1^T}_2^2 -1,\cdots, \norm{C_i^T}_2^2 -1,\cdots, \norm{C_p^T}_2^2 -1 \Big]^T &= 0 \\
    \mu_1^TC &= 0 \\
    \mu_1 &\geq 0 \\
    \mu_2 & \geq 0 \\
    C &\geq 0 \\
    \norm{[C^T]_i}_2^2 &\leq 1\ \forall i
\end{align}

Now, $C^\infty \equiv \underset{t \rightarrow \infty}{\lim}{C^t}$ is found from Equation \ref{eqn:C_Update} as:

\begin{align}
    C^\infty &= C^\infty + \frac{1}{L} \Bigg( 
    2\Theta C^\infty X_C^\infty X_C^\infty + \alpha(C^\infty X_C - X)X_C^\infty - \frac{\beta}{e}BC^\infty \\&\ \ \ \ - 2\gamma\Theta C^\infty({C^\infty}^T\Theta C^\infty + J)^{-1} + \lambda C^\infty\mathbf{1}_{k\times k} \Bigg) \nonumber \\
    0 &= 2\Theta C^\infty X_C^\infty X_C^\infty + \alpha(C^\infty X_C - X)X_C^\infty - \frac{\beta}{e}BC^\infty \\&\ \ \ \ - 2\gamma\Theta C^\infty({C^\infty}^T\Theta C^\infty + J)^{-1} + \lambda C^\infty\mathbf{1}_{k\times k} \nonumber
\end{align}

So, for $\mu = 0$, $C^\infty$ satisfies the KKT conditions.

\textbf{w.r.t. $X_C$}, the KKT conditions are:

\begin{align}
    2C^T\Theta CX_C + \alpha C^T(CX_C - X) = 0
\end{align}

So, $X^\infty \equiv \underset{t \rightarrow \infty}{\lim}{X^t}$ found from Equation \ref{eqn:X_Update} will satisfy this as that equation is just a rearrangement of the KKT condition.

\section{Proof (continued) of Theorem \ref{theorem2:consistency} (conditions for consistency)}
\label{supp:proof_consistency_conditions}
As previously defined in Theorem \ref{theorem2:consistency} $O = C^T A C$ and
$$
\frac{1}{\mu_p} \mathbb{E}[O | C, t] = H(S) \qquad
$$

We need to find "population version" of the loss function in terms of $H(S)$.

\begin{alignat}{2}
\label{eqn:supp_expect_CTAC}
\mathbb{E}[O | C, t] &= \mathbb{E}[C^TAC | C, t] = \mu_p H(S) \\
\mathbb{E}[C^TDC | C, t] &= \mathbb{E}[C^T \text{diag}(\sum_{j=1}^k A_{ij}) C | C, t] 
&&= \mathbb{E}[\text{diag}(\sum_{j=1}^k O_{ij}) | C, t] \\
\label{eqn:supp_expect_CTDC}
&= \mu_p \text{diag}(\sum_{j=1}^{k} H_{ij}) &&
\end{alignat}

So, for $\text{tr}(X_C^TC^T\Theta CX_C)$

\begin{alignat}{2}
& \mathbb{E}[\text{tr}(X_C^TC^T\Theta CX_C) | C, t] & \nonumber
&= \text{tr}(\mathbb{E}[X_C^TC^T\Theta CX_C | C, t]) \\
&= \text{tr}(X_C^T\mathbb{E}[C^T\Theta C | C, t]X_C) & \nonumber
&= \text{tr}(X_C^T\mathbb{E}[C^TDC - C^TAC | C, t]X_C) \\
&= \text{tr}(X_C^T [\mu_p \text{diag}(\sum_{j=1}^{k} H_{ij}) && - \mu_p H(S)] X_C) \nonumber \\
&= \mu_p \text{tr}(X_C^T [\text{diag}(\sum_{j=1}^{k} H_{ij}) && - H(S)] X_C)
\end{alignat}

Next, we have $\frac{1}{2e}\text{tr}(C^TBC)$, which has already been solved in the paper \cite{zhao2012_consistency} in their Appendix (Page 25 of the full document).
$$
\frac{1}{2e} \mathbb{E}[\text{tr}(C^TBC)] = \sum_k \bigg(\frac{H_{kk}}{\Tilde{P_0}} - \bigg(\frac{H_k}{\Tilde{P_0}}\bigg)^2 \bigg)
$$
For $\log\det(C^T\Theta C + J)$, where $J = \frac{1}{k}\mathbb{1}_{k\times k}$, 

We can write $\log(\det (C^T\Theta C + J)) = \text{tr}(\log(C^T\Theta C + J))$ since, $\det(A) = e^{\text{tr}(\log(A))}$.

\begin{alignat}{2}
Z = C^T\Theta C + J &= V\Lambda V^{-1}\\
\text{So,}\\
\text{tr}(\log (V\Lambda V^{-1})) &= \text{tr}(V\log (\Lambda)V^{-1})\\
\log \Lambda &= \log (bI) + \log\bigg( I + \frac{\Lambda}{b} - I \bigg)
\label{eqn:supp_expect_logdet}
\end{alignat}


Using the first-order Taylor expansion of $\log (I + X) = X$, we need to choose $b$ such that
\begin{equation}
l = \bigg\| \frac{\Lambda}{b} - I \bigg\|_F < 1
\label{eqn:supp_l_condition}
\end{equation}
And for the expansion to be a good approximation, we need $l \to 0$. We will enforce this later in \eqref{supp:eqn_logdet_goodapprox}.


\begin{alignat}{2}
& \text{tr}(V\bigg(\log (bI) + \log\bigg( I + \frac{\Lambda}{b} - I \bigg)\bigg)V^{-1})\\
=&\ \text{tr}(V\bigg(\log (bI) + \bigg( I + \frac{\Lambda}{b} - I \bigg)\bigg)V^{-1})\\
=&\ \text{tr}(V\log(bI)V^{-1}) + \frac{1}{b}\text{tr}(V\Lambda V^{-1}) - \text{tr}(I)\\
=&\ \text{tr}(\log(bI)) + \frac{1}{b}\text{tr}(Z) - \text{tr}(I)\\
=&\ k\log(b) + \frac{1}{b}\text{tr}(Z) - k
\end{alignat}

Finding the expectation of from \eqref{eqn:supp_expect_logdet},

\begin{align}
&= k\log(b) + \mathbb{E}\bigg[\frac{1}{b}\text{tr}(Z)\bigg|\ c,t \bigg] - k\\
&= k\log(b) + \frac{1}{b}\text{tr}(\mathbb{E}[Z|\ c,t ]) - k\\
&= k\log(b) + \frac{1}{b}\text{tr}(\mathbb{E}[C^T\Theta C + J|\ c,t ]) - k\\
&\text{Using $\Theta = D - A$, \eqref{eqn:supp_expect_CTAC} and \eqref{eqn:supp_expect_CTDC}}, \nonumber \\
&= k\log(b) + \frac{1}{b}\text{tr}\bigg(\mu_p \text{diag}(\sum_{j=1}^{k} H_{ij}) - \mu_p H(S) \bigg) - k
\end{align}

which is a linear function in $H(S)$.




For the approximation to be good, we can now simplify $l$ as defined in \eqref{eqn:supp_l_condition}:

\begin{align}
\label{supp:eqn_logdet_goodapprox}
l &= \bigg\| \frac{\Lambda}{b} - I \bigg\|_F \\
l &= 
\begin{Vmatrix}
\frac{1}{b}\Lambda_{11} - 1 & \frac{1}{b}\Lambda_{12} & \cdots& \frac{1}{b}\Lambda_{1k} \\
\frac{1}{b}\Lambda_{21} & \frac{1}{b}\Lambda_{22} - 1 & \cdots& \frac{1}{b}\Lambda_{2k} \\
\vdots & \vdots & \ddots & \vdots \\
\frac{1}{b}\Lambda_{k1} & \frac{1}{b}\Lambda_{k2} & \cdots& \frac{1}{b}\Lambda_{kk} - 1 \\
\end{Vmatrix}_F
\end{align}

Writing out this norm, we get a quadratic expression in $\frac{1}{b}$:

\begin{align}
l &= (\sum_{i=1}^k\sum_{j=1}^k \lambda_{ij}^2)\frac{1}{b^2} - 2(\sum_{u=1}^k \lambda_{ii})\frac{1}{b} + k \\
\text{or concisely}, 
l &= \|\Lambda\|_F^2 \frac{1}{b^2} - 2\text{tr}(\Lambda)\frac{1}{b} + k\\
\text{Since, }\Lambda&\text{ is a diagonal matrix, }\\ 
\text{tr}(\Lambda) &= \sum_i \lambda_i = \text{tr}(Z)\\
\text{Also, since }&\Lambda^2\text{ is the eigenvalue matrix for }Z^2,\\
\|\Lambda\|_F^2 &= \sum_i\lambda_i^2 = \text{tr}(Z^2)\\
l &= \text{tr}(Z^2) \frac{1}{b^2} - 2\text{tr}(Z)\frac{1}{b} + k
\end{align}


Since, $l<1 \implies l-1 < 0 \implies l-1 = 0$ has 2 real roots.
Using simple quadratic analysis (in $\frac{1}{b}$), the discriminant $\Delta$ should be positive.

\begin{alignat}{2}
\Delta = 4 \text{tr}(Z)^2 \ - 4(k-1)\text{tr}(Z^2) &> 0 \\
\frac{\text{tr}(Z)^2}{\text{tr}(Z^2)} &> k-1
\end{alignat}

\begin{alignat}{2}
\text{The minimum}& \text{ value of } l \coloneqq l_{min}\text{ occurs at }\\
b &= \frac{\text{tr}(Z^2)}{\text{tr}(Z)} = \text{tr}(Z) \cdot \frac{\text{tr}(Z^2)}{\text{tr}(Z)^2}\\
l_{min} &< 1 \text{ will exist when }\frac{\text{tr}(Z)^2}{\text{tr}(Z^2) } > k-1\\
\text{which holds for }& b < \frac{\text{tr}(Z)}{k-1} \\
\text{So we can always choose }b &< \frac{2k-1}{k-1} \\
\text{ since }\min \text{tr}(Z) &= 2k-1 [\text{proved in \ref{eqn:supp_min_trZ}}]
\end{alignat}







\begin{align}
\label{eqn:supp_min_trZ}
\min \text{tr}(C^T\Theta C + J) &= \min \text{tr}(C^T\Theta C) + \text{tr}(J) \\
&= \min \text{tr}(C^TDC) - \text{tr}(C^TAC) + 1 \\
&= 2e - 2(e - (k - 1)) + 1 \\
&= 2k - 1
\end{align}

Additionally, define \(\Tilde{\pi}_a = \sum_u x_u \Pi_{au}\) with \(\sum_a \Tilde{\pi}_a = 1\), since \(\mathbb{E}[t_i] = 1\).

\subsection{Required Condition a): Lipschitz Continuity}
\label{supp:proof_consistency_conditions_1}
Condition 1: We need to show that $|F(S_1) - F(S_2)| \leq \alpha \|S_1 - S_2\|$ (Lipschitz)

\begin{align}
|F(S_1) - F(S_2)| &\leq |f_1(S_1) - f_1(S_2)| + |f_2(S_1) - f_2(S_2)| + |f_3(S_1) - f_3(S_2)|
\end{align}

Let's first find $\|H(S_1) - H(S_2) \|_F$

\begin{align}
H(S) &= (Sx)P(Sx)^T \\
\| H(S_1) - H(S_2) \|_F &= \| (S_1x)P(S_1x)^T - (S_2x)P(S_2x)^T \|_F \\
&= \| (S_1x)P((S_1 - S_2)x)^T + ((S_1 - S_2)x)P(S_2x)^T \|_F
\end{align}

Next, we see $\| \text{diag}(\sum_{j=1}^k H(S_1)_{ij} ) - \text{diag}(\sum_{j=1}^k H(S_2)_{ij} ) \|_F = \| \text{diag}(\sum_{j=1}^k (H(S_1)_{ij} - H(S_2)_{ij}) ) \|_F $


For the first term, $|f_1(H(S_1)) - f_1(H(S_2))|$, define $H(S)_i = \sum_{j=1}^{k} H(S)_{ij}$


\begin{alignat}{1}
&= |\mu_p \text{tr}(X_C^T [H(S_1) - \text{diag}(\sum_{j=1}^{k} H(S_1)_{ij})] X_C) 
- \mu_p \text{tr}(X_C^T [H(S_2) - \text{diag}(\sum_{j=1}^{k} H(S_2)_{ij})] X_C) | 
\\
&= |\mu_p \bigg( \text{tr}(X_C^T [H(S_1) - H(S_2)] X_C) 
-  \text{tr}(X_C^T \text{diag}([H(S_1)_{i} - H(S_2)_{i}])X_C) \bigg) | 
\\
&\leq |\mu_p| \bigg( \bigg|\text{tr}(X_C^T [H(S_1) - H(S_2)] X_C) \bigg|
- \bigg|\text{tr}(X_C^T \text{diag}([H(S_1)_{i} - H(S_2)_{i}])X_C) \bigg| \bigg)
\\
&\leq |\mu_p| \bigg( \|\text{tr}\| \bigg\|X_C^T [H(S_1) - H(S_2)] X_C \bigg\|_F
- \|\text{tr}\|\bigg\|X_C^T \text{diag}([H(S_1)_{i} - H(S_2)_{i}])X_C \bigg\|_F \bigg)
\\
&\leq |\mu_p| \bigg( \|\text{tr}\| \|X_C\|^2 \bigg\|H(S_1) - H(S_2) \bigg\|_F
- \|\text{tr}\| \|X_C\|^2 \bigg\| \text{diag}([H(S_1)_{i} - H(S_2)_{i}]) \bigg\|_F \bigg)
\end{alignat}

Taking the first sub-term,

\begin{align}
&|\mu_p|\ \|\text{tr}\|\ \|X_C\|^2\ \bigg\|H(S_1) - H(S_2) \bigg\|_F \\
&= \|\text{tr}\|\ \|X_C\|^2\ \bigg\|  (S_1x)P((S_1 - S_2)x)^T + ((S_1 - S_2)x)P(S_2x)^T \bigg\|_F \\
&\leq \|\text{tr}\|\ \|X_C\|^2\ \|P\|_F\ (\|S_1x\|_F + \|S_2x\|_F)\ \|(S_1 - S_2)x\|_F \\
&\leq \|\text{tr}\|\ \|X_C\|^2\ \|P\|_F\ (\|S_1x\|_F + \|S_2x\|_F)\ \|t\|_F\ \|(S_1 - S_2)\|_F \\
&= \alpha_1 \|(S_1 - S_2)\|_F
\end{align}

For the second sub-term,
define $\tilde{S}_{ka} = \sum_u x_u S_{kau} = (Sx)_{ka}$

\begin{align}
H(S)_i = \sum_{j=1}^{k} H(S)_{ij} &= \sum_{as} \tilde{\pi}_sP_{as}\tilde{S}_{ia} = \sum_{asu} \tilde{\pi}_s P_{as} x_u S_{iau} \\
|H(S_1)_{i} - H(S_2)_{i}| &= | \sum_{asu} \tilde{\pi}_s P_{as} x_u {S_1}_{iau} - \sum_{asu} \tilde{\pi}_s P_{as} x_u {S_2}_{iau} |\\
&= |\sum_{asu} \tilde{\pi}_s P_{as} x_u ({S_1 - S_2})_{iau}| \\
&\leq \sum_{asu} |\tilde{\pi}_s P_{as} x_u ({S_1 - S_2})_{iau}| \\
&\leq \sum_{asu} |\tilde{\pi}_s P_{as} x_u| \cdot |({S_1 - S_2})_{iau}| \\
&\leq \sum_{asu} |\tilde{\pi}_s P_{as} x_u| \cdot \sum_{au}|({S_1 - S_2})_{iau}| \\
&= \alpha_1^\prime \sum_{au}|({S_1 - S_2})_{iau}| \\
\text{So, } \|\text{diag}([H(S_1)_{i} - H(S_2)_{i}])\|_F &= \sqrt{\sum_{i=1}^k \bigg(\sum_{j=1}^k H(S_1)_{ij} - H(S_2)_{ij}\bigg)^2} \\
&\leq \sqrt{\sum_{i=1}^k \bigg(\alpha_1^\prime \sum_{au}|({S_1 - S_2})_{iau}| \bigg)^2} \\
&= \alpha_1^\prime \|S_1 - S_2 \|_{1,1,2}
\end{align}


For the second term, $|f_2(H(S_1)) - f_2(H(S_2))|$

\begin{alignat}{2}
&= \frac{\mu_p}{b} \bigg| \text{tr}(H(S_1) - \text{diag}([H(S_1)_{i}]) ) &&- \text{tr}(H(S_2) - \text{diag}([H(S_2)_{i}]) ) \bigg| \\
&= \frac{\mu_p}{b} \bigg| \text{tr}(H(S_1) - H(S_2) ) &&-  \text{tr}\bigg(\text{diag}([H(S_1)_{i} - H(S_2)_{i}]) \bigg) \bigg| \\
&\leq \frac{\mu_p}{b} \bigg(\bigg| \text{tr}(H(S_1) - H(S_2) ) \bigg| &&+ \bigg| \text{tr}\bigg(\text{diag}([H(S_1)_{i} - H(S_2)_{i}]) \bigg) \bigg|\bigg) \\
&\leq \frac{\mu_p}{b} \bigg( \|\text{tr}\|\ \bigg\|H(S_1) - H(S_2)\bigg\|_F  &&+ \|\text{tr}\|\bigg\|\text{diag}([H(S_1)_{i} - H(S_2)_{i}]) \bigg\|_F \bigg| \bigg)\\
&\text{As shown above, }\\
&\leq \alpha_2 \|(S_1 - S_2)\|_F &&+ \alpha_2^\prime \|(S_1 - S_2)\|_{1,1,2}
\end{alignat}


For the third term, it has already been proven in \cite{zhao2012_consistency}, but we also prove it here:

$|f_3(H(S_1)) - f_3(H(S_2))|$

\begin{align}
&= \bigg| \frac{\text{tr}(H(S_1))}{\tilde{P_0}} - \frac{\sum_{i=1}^k (\sum_{j=1}^k H(S_1)_{ij})^2}{\tilde{P_0}^2} - \frac{\text{tr}(H(S_2))}{\tilde{P_0}} + \frac{\sum_{i=1}^k (\sum_{j=1}^k H(S_2)_{ij})^2}{\tilde{P_0}^2} \bigg| \\
&= \bigg| \frac{\text{tr}(H(S_1) - H(S_2))}{\tilde{P_0}} - \frac{\sum_{i=1}^k \big[H(S_1)_{i}^2 - H(S_2)_{i}^2\big] }{\tilde{P_0}^2} \bigg| \\
&\leq \frac{1}{\tilde{P_0}}\bigg| \text{tr}(H(S_1) - H(S_2))\bigg| + \frac{1}{\tilde{P_0}^2}\bigg| \sum_{i=1}^k \big[H(S_1)_{i}^2 - H(S_2)_{i}^2\big] \bigg| \\
&\text{As shown above, } \\
&\leq \alpha_3 \|(S_1 - S_2)\|_F + \frac{1}{\tilde{P_0}^2}\bigg| \sum_{i=1}^k \big[H(S_1)_{i} -  H(S_2)_{i}\big]\cdot\big[H(S_1)_{i} + H(S_2)_{i}\big] \bigg| \\
&\leq \alpha_3 \|(S_1 - S_2)\|_F + \frac{1}{\tilde{P_0}^2} \sum_{i=1}^k \big|H(S_1)_{i} + H(S_2)_{i}\big| \cdot \sum_{i=1}^k \big|H(S_1)_{i} - H(S_2)_{i}\big| \\
&= \alpha_3 \|(S_1 - S_2)\|_F + \alpha_3^\prime \sum_{iau} \big|S_1 - S_2\big|_{iau} \\
&= \alpha_3 \|(S_1 - S_2)\|_F + \alpha_3^\prime \|S_1 - S_2\|_{1,1,1}
\end{align}

\subsection{Required Condition b): Continuity of directional second derivative}
\label{supp:proof_consistency_conditions_2}
Condition 2: $W = H(\mathbb{D})$
\begin{align}
&\frac{\partial^2}{\partial\varepsilon^2}F(M_0 + \varepsilon(M_1 - M_0), \mathbf{t_0} + \varepsilon(\mathbf{t_1} - \mathbf{t_0}))\bigg|_{\varepsilon=0^+} \\
&= \frac{\partial^2}{\partial\varepsilon^2}f_1(M_0 + \varepsilon(M_1 - M_0)) + \frac{\partial^2}{\partial\varepsilon^2}f_2(M_0 + \varepsilon(M_1 - M_0)) + \frac{\partial^2}{\partial\varepsilon^2}f_3(M_0 + \varepsilon(M_1 - M_0)) \bigg|_{\varepsilon=0^+} 
\end{align}
Finding the directional derivative for $f_1$,
\begin{align}
f_1(M_0 + \varepsilon(M_1 - M_0)) &= \mu_p \text{tr}(X_C^T [M_0 + \varepsilon(M_1 - M_0) - \text{diag}(\sum_{j=1}^{k} \big(M_0 + \varepsilon(M_1 - M_0)\big)_{ij})] X_C) 
\\
&= \mu_p\bigg( \text{tr}(X_C^T M_0X_C) + \varepsilon\ \text{tr}(X_C^T (M_1 - M_0)X_C) \\ 
&\quad - \text{tr}(X_C^T \text{diag}([\sum_{j=1}^{k} (M_0)_{ij}])X_C) - \varepsilon\ \text{tr}(X_C^T \text{diag}([\sum_{j=1}^{k} (M_1 - M_0)_{ij}])X_C) \bigg) 
\\
\frac{\partial^2}{\partial\varepsilon}f_1(M_0 + \varepsilon(M_1 - M_0)) &= \mu_p\bigg(\text{tr}(X_C^T (M_1 - M_0)X_C) - \text{tr}(X_C^T \text{diag}([\sum_{j=1}^{k} (M_1 - M_0)_{ij}])X_C) \bigg) \\
\frac{\partial^2}{\partial\epsilon^2}f_1(M_0 + \varepsilon(M_1 - M_0) &= 0
\end{align}
Finding the directional derivative for $f_2$,
\begin{alignat}{2}
&f_2(M_0 + \varepsilon(M_1 - M_0))\\
&= \frac{\mu_p}{b} \text{tr}\bigg(M_0 + \varepsilon(M_1 - M_0) &&- \text{diag}(\sum_{j=1}^{k} \big(M_0 + \varepsilon(M_1 - M_0)\big)_{ij}) \bigg) - \frac{1}{b} + k - k\log b \\
&\frac{\partial}{\partial\varepsilon}f_2(M_0 + \varepsilon(M_1 - M_0)) &&= \frac{\mu_p}{b} \text{tr}(M_1 - M_0) - \frac{\mu_p}{b} \text{tr}\bigg( \text{diag}(\sum_{j=1}^{k} (M_1 - M_0)_{ij}) \bigg) \\
&\frac{\partial^2}{\partial\varepsilon^2}f_2(M_0 + \varepsilon(M_1 - M_0)) &&= 0
\end{alignat}
Finding the directional derivative for $f_3$,
\begin{align}
f_3(M_0 + \varepsilon(M_1 - M_0)) &= 
\frac{\text{tr}(M_0 + \varepsilon(M_1 - M_0))}{\tilde{P_0}} - \frac{\sum_{i=1}^k (\sum_{j=1}^k (M_0 + \varepsilon(M_1 - M_0))_{ij})^2}{\tilde{P_0}^2} \\
&= \frac{\text{tr}(M_0 + \varepsilon(M_1 - M_0))}{\tilde{P_0}} - \frac{\sum_{i=1}^k (\sum_{asu} \tilde{\pi}_s P_{as}x_u (M_0 + \varepsilon(M_1 - M_0))_{iau})^2}{\tilde{P_0}^2} \\
\frac{\partial}{\partial\varepsilon}f_3(M_0 + \varepsilon(M_1 - M_0)) &= \nonumber\\
\frac{\text{tr}(M_1 - M_0)}{\tilde{P_0}} + \sum_{i=1}^k &2(\sum_{asu} \tilde{\pi}_s P_{as}x_u (M_1 - M_0)_{iau}) \times\frac{\sum_{i=1}^k (\sum_{asu} \tilde{\pi}_s P_{as}x_u (M_0 + \varepsilon(M_1 - M_0))_{iau})^2}{\tilde{P_0}^2} \\
\frac{\partial^2}{\partial\varepsilon^2}f_3(M_0 + \varepsilon(M_1 - M_0)) &=
\frac{\sum_{i=1}^k 2(\sum_{asu} \tilde{\pi}_s P_{as}x_u (M_1 - M_0)_{iau})^2 }{\tilde{P_0}^2}
\end{align}
Adding up these three,
\[
\frac{\partial^2}{\partial\varepsilon^2}F(M_0 + \varepsilon(M_1 - M_0), \mathbf{t_0} + \varepsilon(\mathbf{t_1} - \mathbf{t_0})) = \frac{\sum_{i=1}^k 2(\sum_{asu} \tilde{\pi}_s P_{as}x_u (M_1 - M_0)_{iau})^2 }{\tilde{P_0}^2}
\]
which is continuous in $(M_1, \mathbf{t_1})$ for all $(M_0, \mathbf{t_0})$ in a neighborhood of $(W, \pi)$.

\subsection{Required Condition c): Upper bound of first derivative}
\label{supp:proof_consistency_conditions_3}

With $G(S) = F(H(S), h(S))$, $\frac{\partial G ((1-\varepsilon)\mathbb{D} + \varepsilon S)}{\partial \varepsilon}|_{\varepsilon=0^+} < -C < 0 \ \forall\ \pi, P$

$G(S) = f_1(H(S)) + f_2(H(S)) + f_3(H(S))$

Let  $\bar{S} = ((S - \mathbb{D})t)P(\mathbb{D}t)^T + (\mathbb{D}t)P(S-\mathbb{D}t)^T$

For $f_1$

\begin{align}
f_1(H((1-\varepsilon)\mathbb{D} + \varepsilon S)) &= \mu_p \text{tr}(X_C^T [H((1-\varepsilon)\mathbb{D} + \varepsilon S) - \text{diag}(\sum_{j=1}^{k} H((1-\varepsilon)\mathbb{D} + \varepsilon S)_{ij})] X_C) \\
H(S) &= (Sx)P(Sx)^T \\
H((1-\varepsilon)\mathbb{D} + \varepsilon S) &= ((1-\varepsilon)\mathbb{D}x + \varepsilon St)P((1-\varepsilon)\mathbb{D}x + \varepsilon St)^T \\
&= (\mathbb{D}x + \varepsilon (S - \mathbb{D})x)P(\mathbb{D}x + \varepsilon (S - \mathbb{D})x)^T \\
&= (\mathbb{D}x)P(\mathbb{D}x)^T + \varepsilon (\mathbb{D}x)P((S - \mathbb{D})x)^T \\&\quad\ + \varepsilon ((S - \mathbb{D})x)P(\mathbb{D}x)^T + \varepsilon^2 ((S - \mathbb{D})x)P((S - \mathbb{D})x)^T \\
\text{Finally, } f_1((1-\varepsilon)\mathbb{D} + \varepsilon S)
&= \text{tr}\big(X_C^T(\mathbb{D}x)P(\mathbb{D}x)^TX_C\big) + \varepsilon\ \text{tr}\big(X_C^T(\mathbb{D}x)P((S - \mathbb{D})x)^TX_C\big) \\&\quad\ + \varepsilon\ \text{tr}\big(X_C^T((S - \mathbb{D})x)P(\mathbb{D}x)^TX_C\big) + \varepsilon^2\ \text{tr}\big(X_C^T((S - \mathbb{D})x)P((S - \mathbb{D})x)^TX_C\big) \\
&\quad\ + 
\text{tr}\big(X_C^T\text{diag}([H(\mathbb{D})_i])X_C\big) +
\varepsilon^2\ \text{tr}\big(X_C^T\text{diag}([H(S-\mathbb{D})_i])X_C\big) \\&\quad\ + 
\varepsilon\ \text{tr}\big(X_C^T\text{diag}([\big(((S - \mathbb{D})x)P(\mathbb{D}x)^T + (\mathbb{D}x)P(S-\mathbb{D}x)^T \big)_i])X_C\big) \\
\text{Now,} \frac{\partial f_1}{\partial \varepsilon}\bigg|_{\varepsilon = 0^+}
	&= \text{tr}\big(X_C^T(\bar{S} - \text{diag}([\bar{S}_i]))X_C\big)
\end{align}

For $f_2$,

\begin{align}
f_2(H((1-\varepsilon)\mathbb{D} + \varepsilon S)) &= \frac{\mu_p}{b} \text{tr}\bigg(H((1-\varepsilon)\mathbb{D} + \varepsilon S) - \text{diag}([H((1-\varepsilon)\mathbb{D} + \varepsilon S)_i]) \bigg) - \frac{1}{b} + k - k\log b \\
\frac{\partial f_2}{\partial \varepsilon}\bigg|_{\varepsilon = 0^+} &= \frac{\mu_p}{b} \text{tr}\bigg(\bar{S} - \text{diag}([\bar{S}_i]) \bigg)
\end{align}

For $f_3$,

\begin{align}
f_3(H((1-\varepsilon)\mathbb{D} + \varepsilon S)) 
&= \frac{1}{\tilde{P}_0} \text{tr}\bigg( (\mathbb{D}x)P(\mathbb{D}x)^T + \varepsilon (\mathbb{D}x)P((S - \mathbb{D})x)^T \\
&\qquad \qquad\ + \varepsilon ((S - \mathbb{D})x)P(\mathbb{D}x)^T + \varepsilon^2 ((S - \mathbb{D})x)P((S - \mathbb{D})x)^T \bigg)\\
&- \frac{1}{\tilde{P}_0^2} \sum_{i=1}^k\sum_{j=1}^k\bigg(\bigg[ (\mathbb{D}x)P(\mathbb{D}x)^T + \varepsilon (\mathbb{D}x)P((S - \mathbb{D})x)^T \\
&\qquad \qquad\ + \varepsilon ((S - \mathbb{D})x)P(\mathbb{D}x)^T + \varepsilon^2 ((S - \mathbb{D})x)P((S - \mathbb{D})x)\bigg]_{ij} \bigg)^2 \\
\frac{\partial f_3}{\partial \varepsilon}\bigg|_{\varepsilon = 0^+} 
&= \frac{1}{\tilde{P}_0} \text{tr}\bigg( \bar{S} \bigg) - \frac{2}{\tilde{P}_0} \sum_{i=1}^k\bigg(\sum_{j=1}^k\bigg[ (\mathbb{D}x)P(\mathbb{D}x)^T\bigg]_{ij}\times \sum_{j=1}^k[\bar{S}]_{ij} \bigg)^2
\end{align}
From here, the proof is followed as in Appendix of \citet{zhao2012_consistency} (in Proof of Theorem 3.1)
 which also borrows from \citet{bickel2009_mod_consistency}.

\section{Dataset Summaries and Metrics}
\label{supp:datasets}

\begin{wraptable}{R}{0.5\linewidth}
  \centering
  \begin{tabular}{crrrr}
    \toprule
    Name    & p ($|\textbf{V}|$) & n ($|X_i|$) & e ($|E|$) & k ($y$) \\
    \midrule
    Cora    & 2708  & 1433 & 5278  & 7 \\
    CiteSeer& 3327  & 3703 & 4614  & 6 \\
    PubMed  & 19717 & 500  & 44325 & 3 \\
    \midrule
    Coauthor CS & 18333 & 6805 & 163788 & 15 \\ 
    Coauthor Physics & 34493 & 8415 & 495924 & 5 \\
    Amazon Photo & 7650 & 745 & 238162 & 8 \\
    Amazon PC & 13752 & 767 & 491722 & 10 \\
    ogbn-arxiv & 169343 & 128 & 1166243	& 40 \\
    \midrule
    Brazil  & 131   & 0 & 1074  & 4 \\
    Europe  & 399   & 0 & 5995  & 4 \\
    USA     & 1190  & 0 & 13599 & 4 \\
    \bottomrule
  \end{tabular}
  \caption{Datasets summary.}
  \label{datasets}
\end{wraptable}

Refer to Table \ref{datasets} for the dataset summary.

\textbf{Metrics.} 
A pair of nodes are said to be in agreement if they belong to the same class and are assigned to the same cluster, or they belong to different classes and have been assigned different clusters. For a particular partitioning, ARI is the fraction of agreeable nodes in the graph. Accuracy is obtained by performing a maximum weight bipartite matching between clusters and labels. NMI measures the normalized similarity between the clusters and the labels, and is robust to class imbalances. Mutual Information between two labellings $X$ and $Y$ of the same data is defined as
$ MI(X,Y) = \sum_{i=1}^{|X|} \sum_{j=1}^{|Y|} \frac{|X_i \cap Y_i|}{N} \log\frac{N|X_i \cap Y_i|}{|X_i||Y_i|} $
and it is scaled between 0 to 1. 

\section{Training Details}
\label{supp:training_details}
All experiments were run on an NVIDIA A100 GPU and Intel Xeon 2680 CPUs. We are usually running 4-16 experiments together to utilize resources (for example, in 40GB GPU memory, we can run 8 experiments on PubMed simultaneously). Again, the memory costs are more than dominated by the dataset. All experiments used the same environment running CentOS 7, Python 3.9.12, PyTorch 2.0, PyTorch Geometric 2.2.0.

\section{VGAE}
\label{supp:VGAE_eqns}
In a VGAE, the encoder learns mean ($\mathbf{\mu}$) and variance ($\mathbf{\sigma}$): 
\(
    \mu = \text{GCN}_\mu (X,A) \text{and}
    \log\sigma = \text{GCN}_\sigma (X,A)
\)
By using the reparameterization trick, we get the distribution of the latent space as:
\(
    q(Z|X,A) = \prod_{i=1}^N q(\mathbf{z_i}|X, A) = \prod_{i=1}^N \mathcal{N}(\mathbf{z_i}|\mu_i, \textrm{diag}(\sigma_i^2))
\)
A common choice for decoder is inner-product of the latent space with itself which giving us the reconstructed $\hat{A}$.
\(
    p(\hat{A}|Z) = \prod_{i=1}^p\prod_{j=1}^p p(\hat{A}_{ij}|z_i,z_j), \ \ \text{with} \ \ p(\hat{A}_{ij} = 1|z_i,z_j) = \textrm{sigmoid}(z_i^Tz_j)
\)

\section{Plots of evolution of latent space for other datasets}
\label{supp:latent_plots}

Refer to Figure \ref{fig:supp_latent}.
We can see the clusters forming in the latent space of the VGAEs. In the case of Q-VGAE, since a GCN is used on this space, it can learn non-linearities and the latent space shows different structures (like a starfish in CiteSeer). Moreoever, these structures have their geometric centres at the origin and grow out from there.
In contrast, for Q-GMM-VGAE, since a GMM is being learnt over the latent space, the samples are encouraged to be normally distributed in their independent clusters, all of which have different means and comparable standard deviations. So, we see multiple "blobs", which more or less follow a normal distribution. 
This plot effectively shows why a GMM-VGAE is more expressive than a VGAE.

\begin{figure}[!ht]
    \begin{subfigure}[t]{\linewidth}
        \centering
        \includegraphics[width=0.8\linewidth]{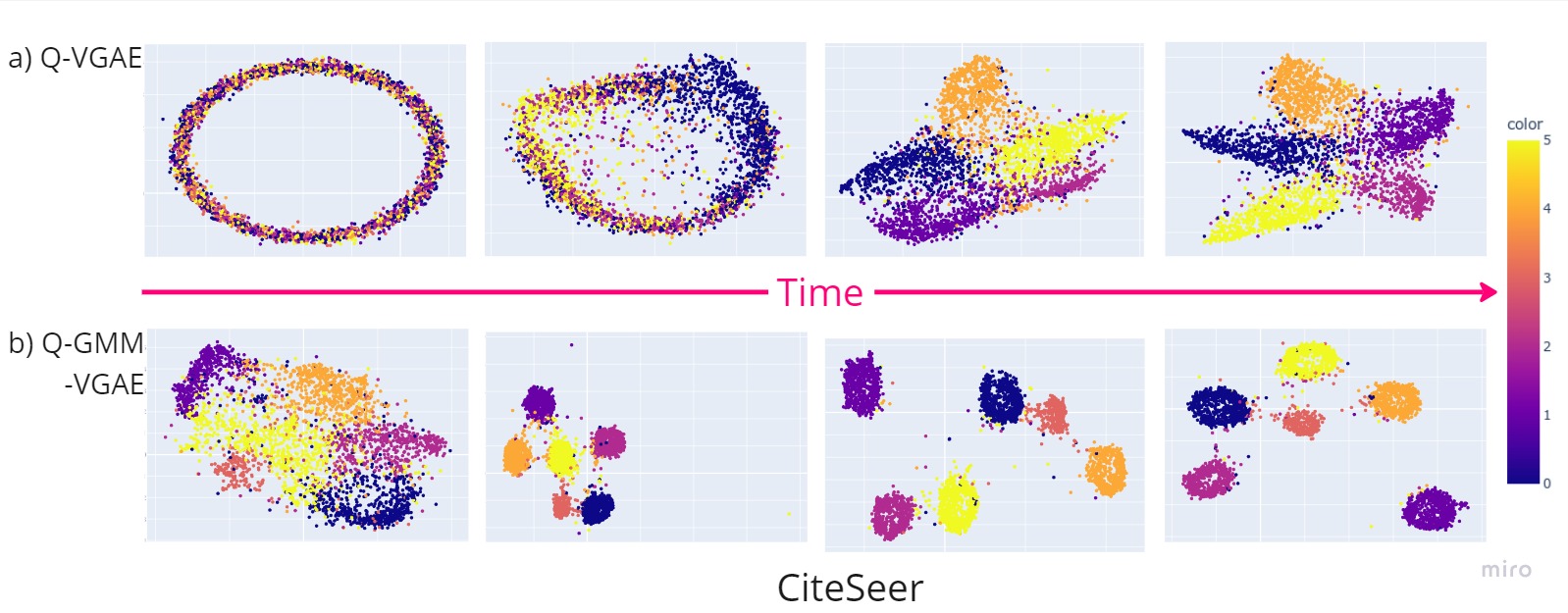}
    \end{subfigure}
    \begin{subfigure}[t]{\linewidth}
        \centering
        \includegraphics[width=0.8\linewidth]{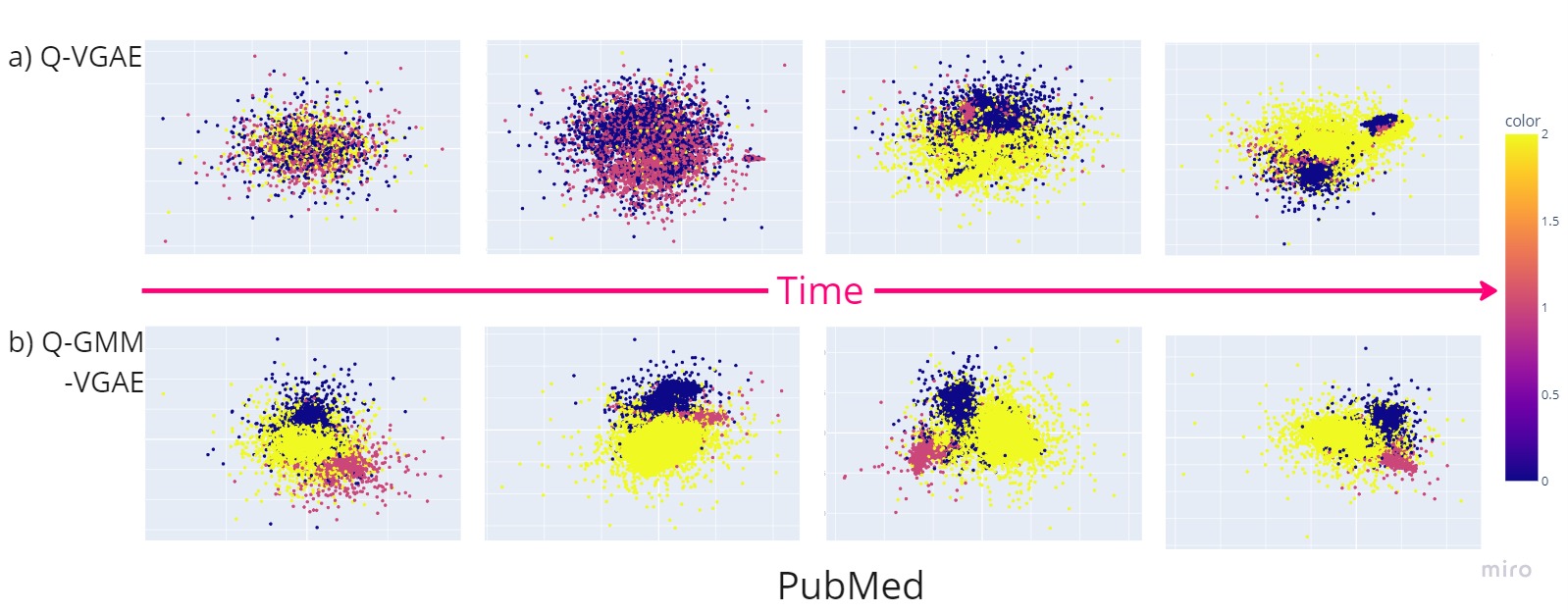}
    \end{subfigure}
    \begin{subfigure}[t]{\linewidth}
        \centering
        \includegraphics[width=0.8\linewidth]{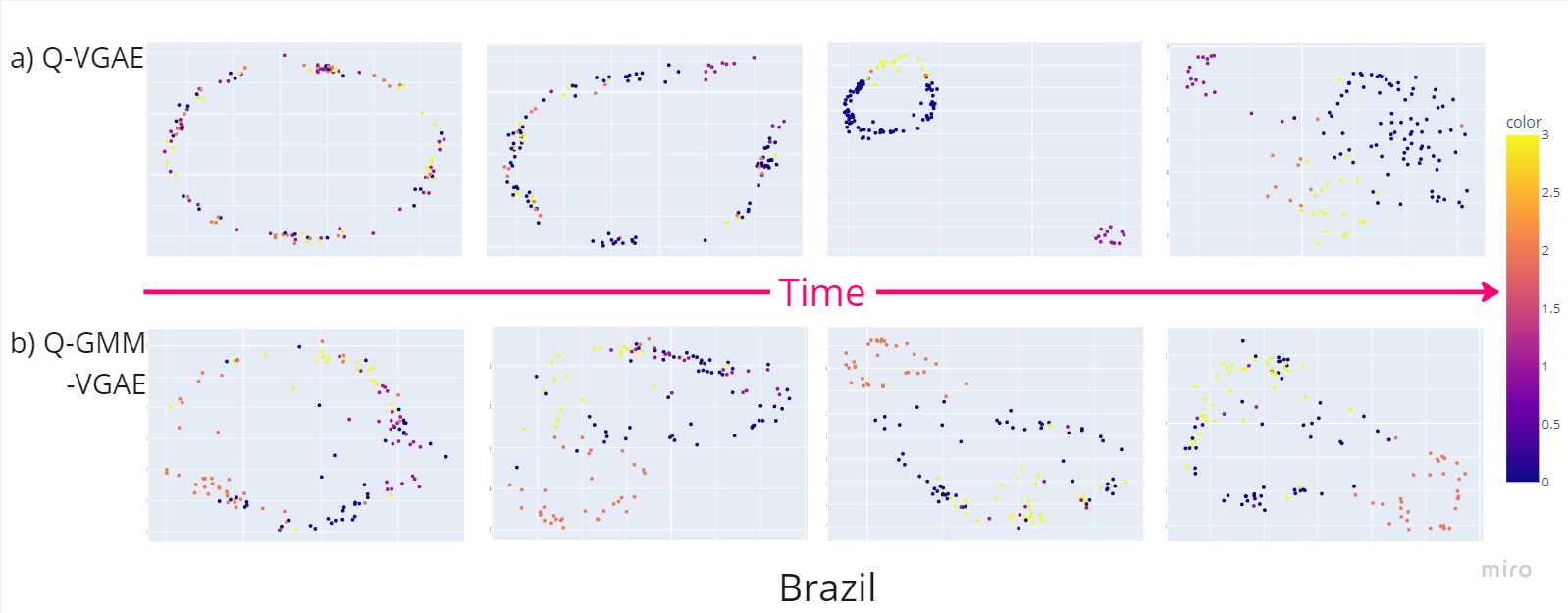}
    \end{subfigure}
    \begin{subfigure}[t]{\linewidth}
        \centering
        \includegraphics[width=0.8\linewidth]{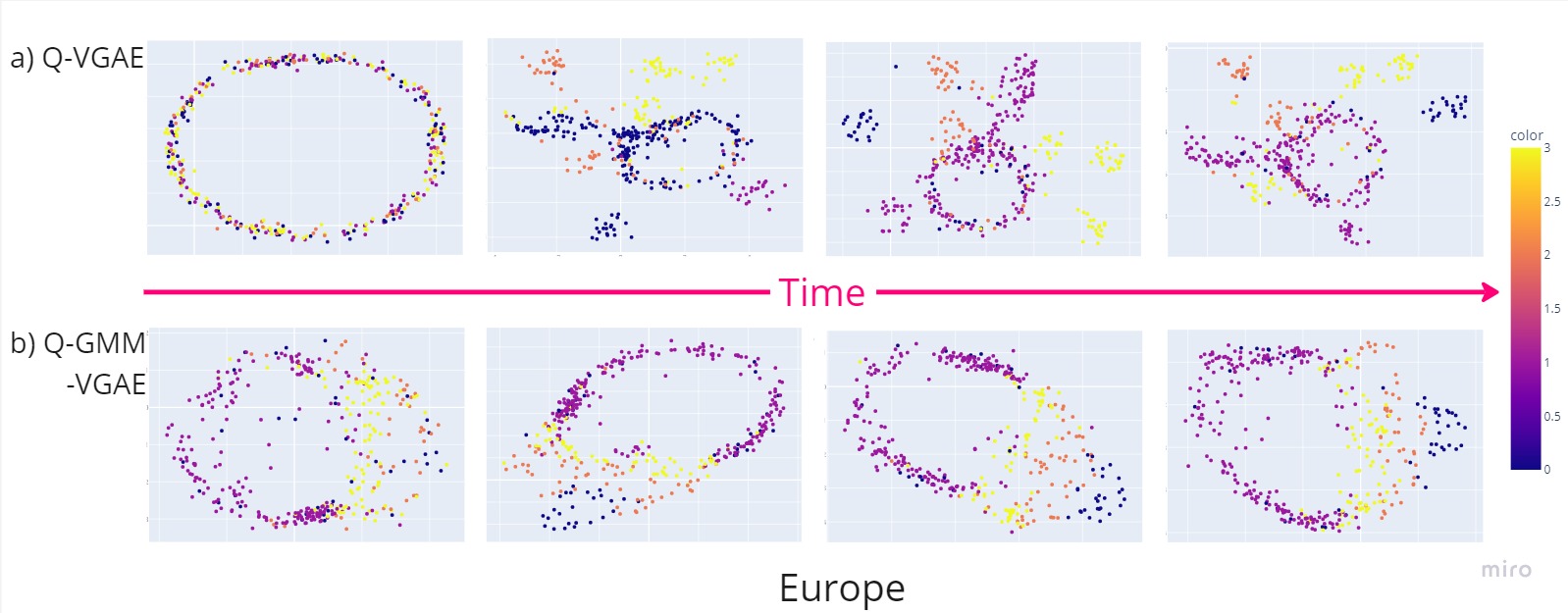}
    \end{subfigure}
    \caption{Plots of evolution of latent space for Q-VGAE and Q-GMM-VGAE methods for CiteSeer, PubMed, Brazil (Air Traffic) and Europe (Air Traffic) datasets.}
    \label{fig:supp_latent}
\end{figure}

\section{Attributed SBM theory and results}
\label{supp:sbm}
We validate the robustness and sensitivity of proposed methods to variance in the node features and graph structure. We are also generating features using a multivariate mixture generative model such that the node features of each block are sampled from normal distributions where the centers of clusters are vertices of a hypercube. 

\begin{figure}[hb]
    \centering
    \begin{subfigure}[t]{0.24\linewidth}
        \includegraphics[width=\linewidth]{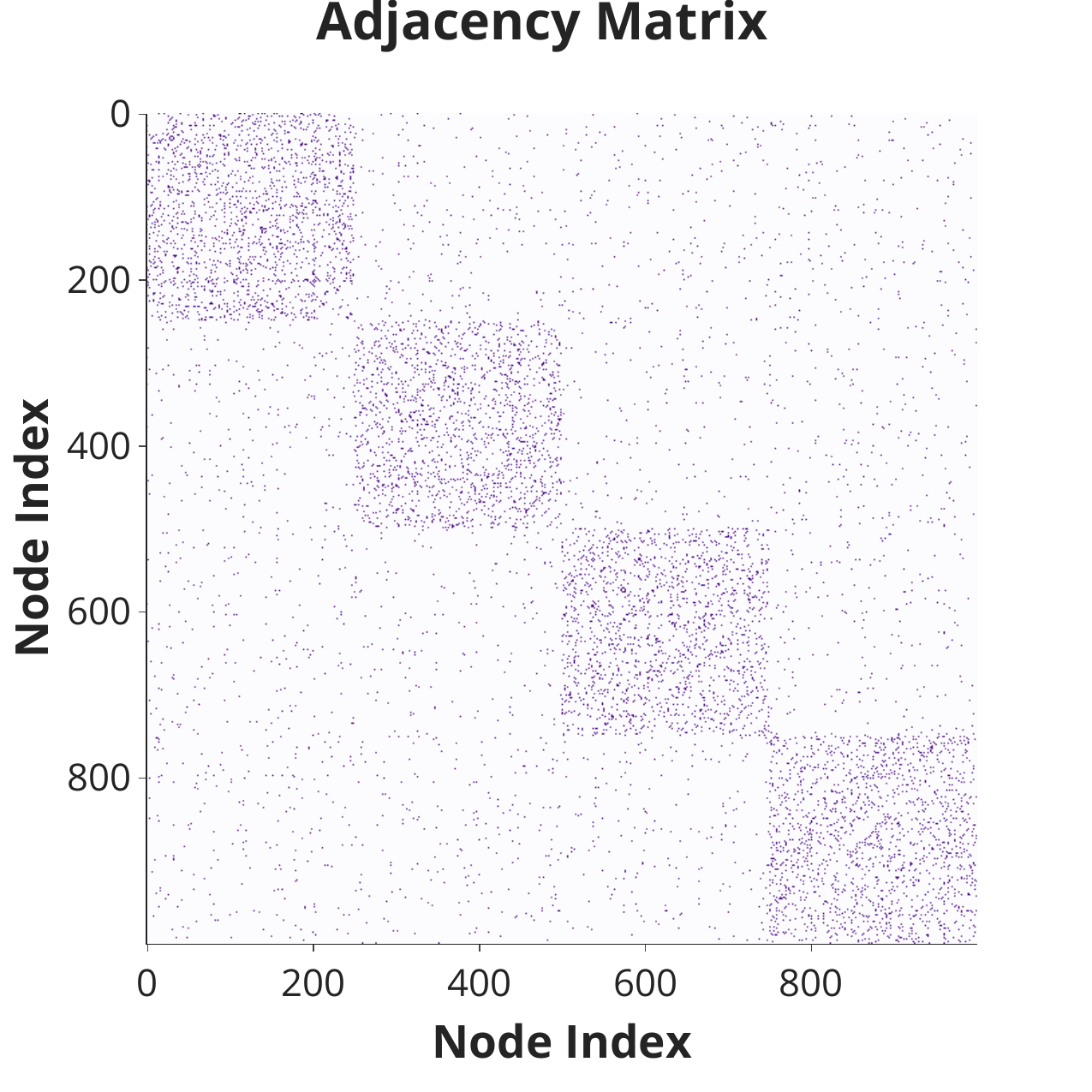}
        \caption{Adjacency Matrix}
    \end{subfigure}
    \begin{subfigure}[t]{0.24\linewidth}
        \includegraphics[width=\linewidth]{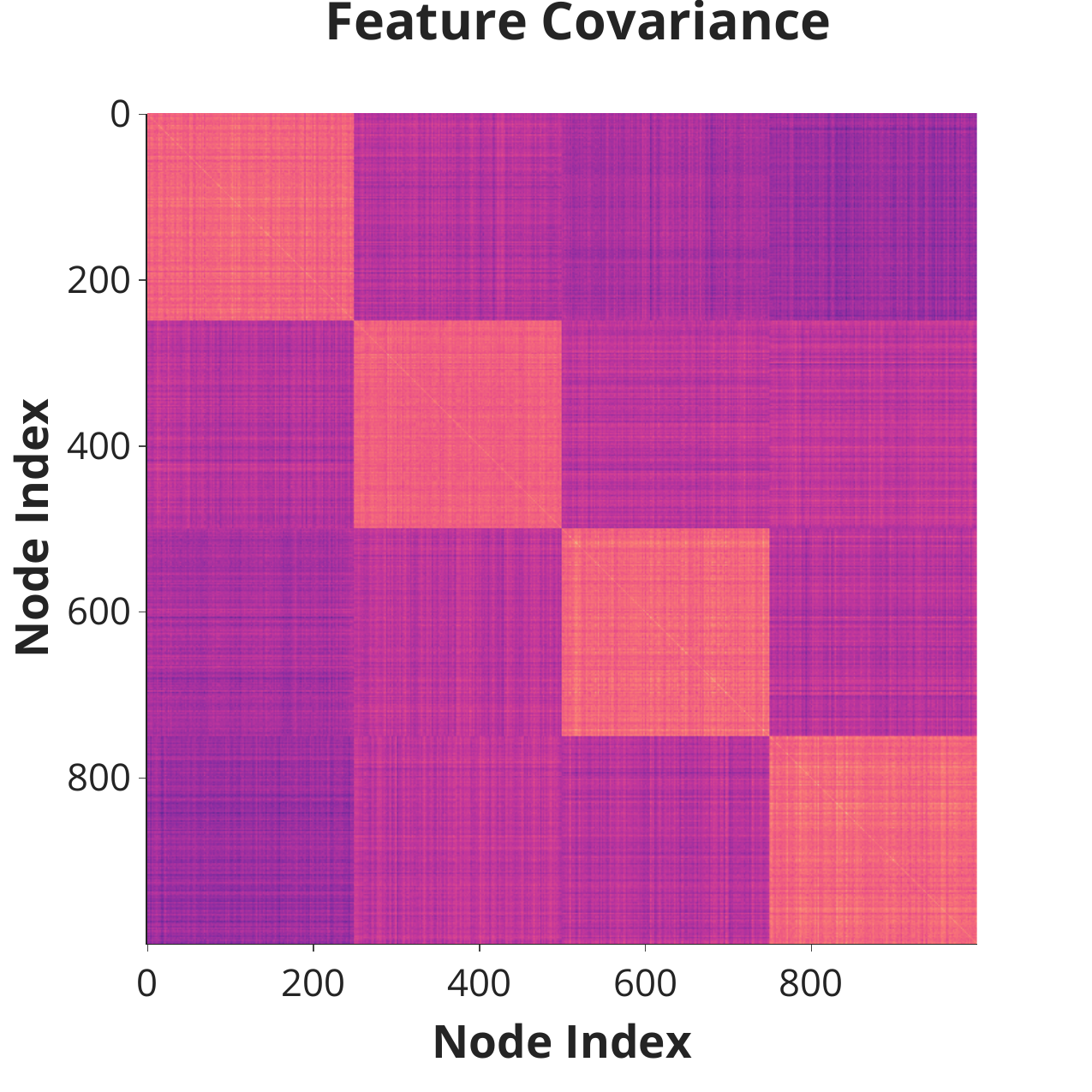}
        \caption{Feature Covariance\\ Matrix\\ $k_f = k$}
    \end{subfigure}
    \begin{subfigure}[t]{0.24\linewidth}
        \includegraphics[width=\linewidth]{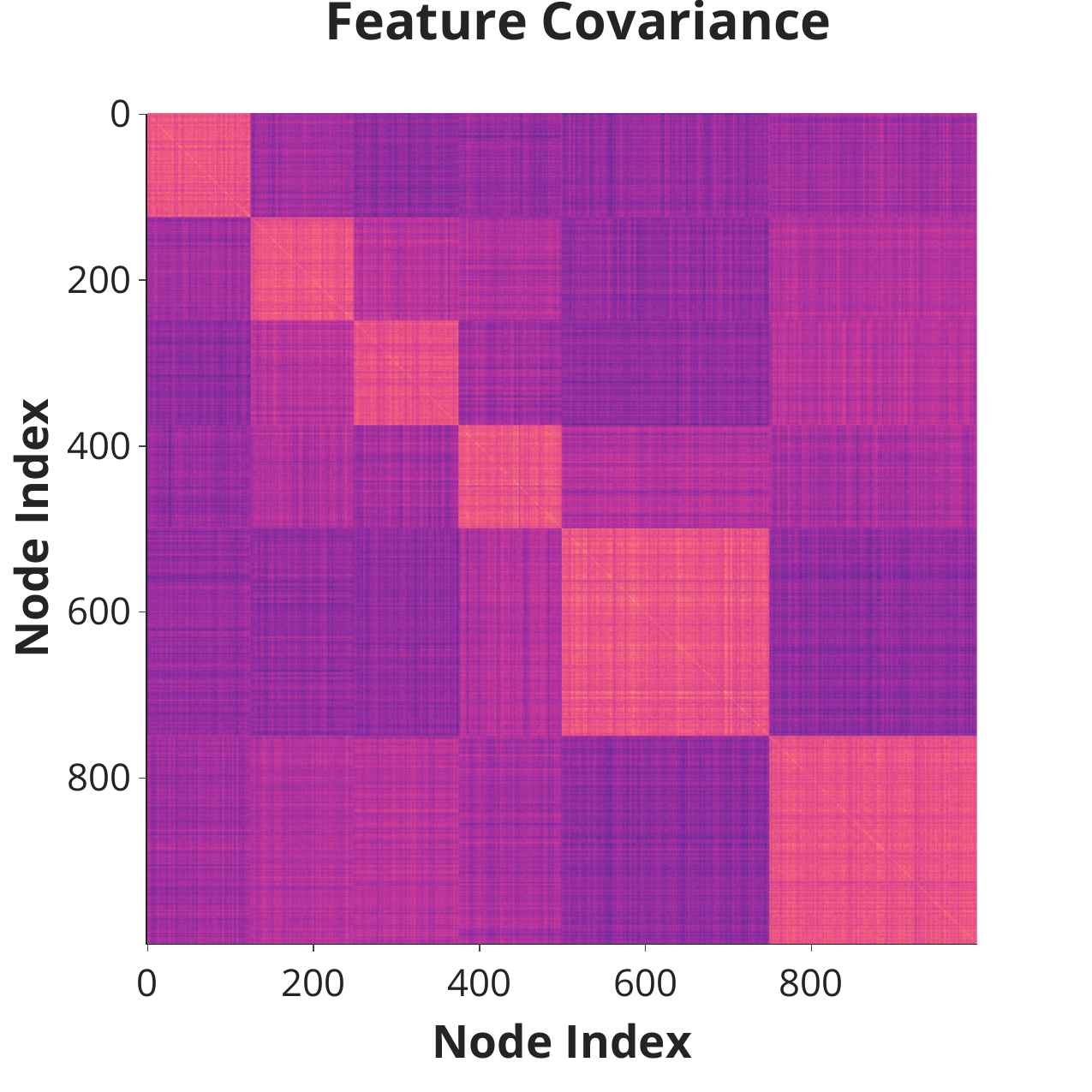}
        \caption{Feature Covariance\\ Matrix\\ $k_f > k$}
    \end{subfigure}
    \begin{subfigure}[t]{0.24\linewidth}
        \includegraphics[width=\linewidth]{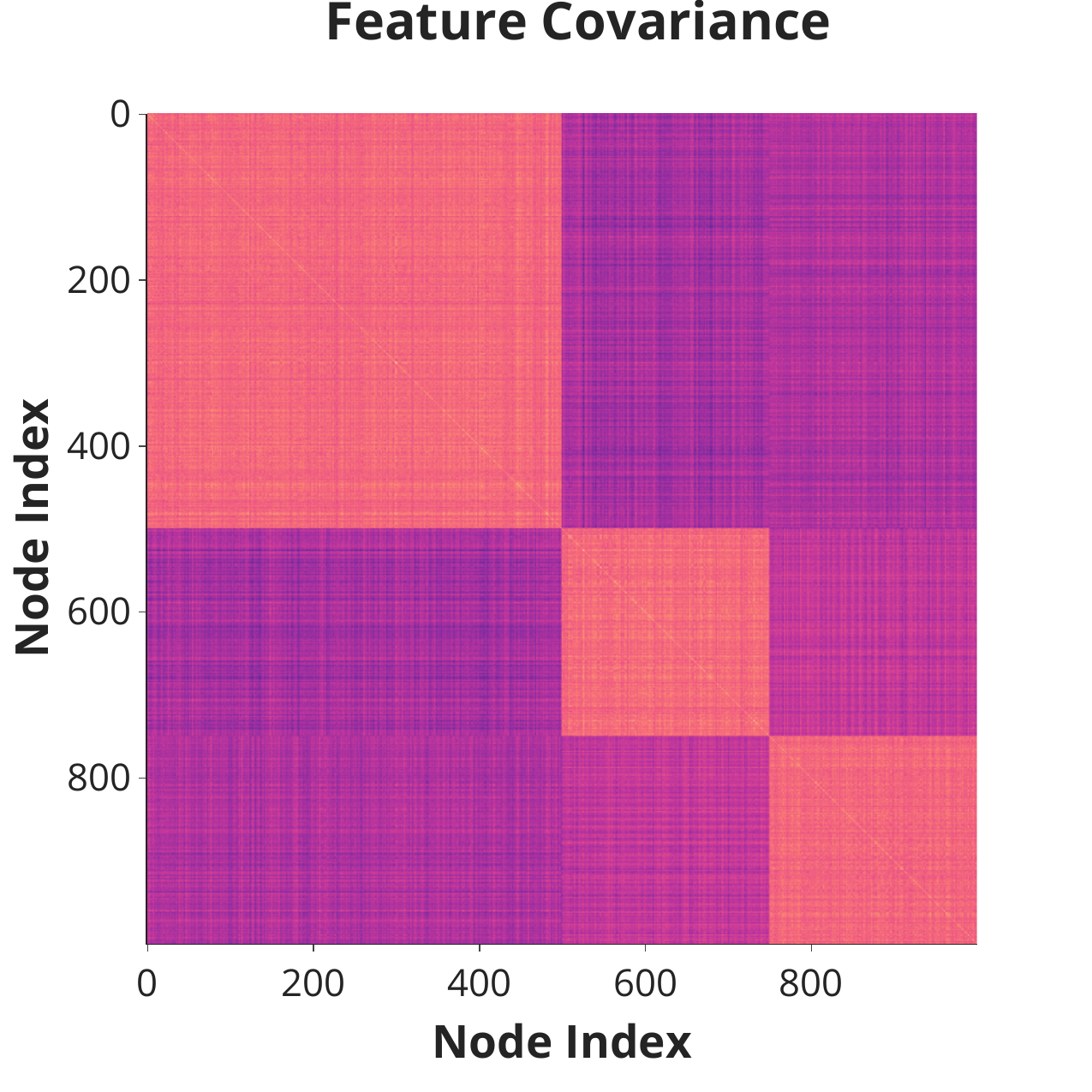}
        \caption{Feature Covariance\\ Matrix\\ $k_f < k$}
    \end{subfigure}
    \caption{Visualization of the generated adjacency and feature covariance matrices for the ADC-SBM}
    \label{fig:supp_sbm}
\end{figure}

\textbf{SBM.}
The Stochastic Block Model (SBM)\citep{NowickiSnijders2001SBM} is a generative model for graphs that incorporates probabilistic relationships between nodes based on their community assignments. In the basic SBM, a network with $p$ nodes is divided into $k$ communities or blocks denoted by $C_i$, where $i = 1, 2, \cdots, k$. 
The SBM defines a symmetric block probability matrix $B$ with size $(k \times k)$, where each entry $B_{ij}$ represents the probability of an edge between a node in community $C_i$ and a node in community $C_j$. Diagonal entries of this matrix represents the probabilities of intra-cluster edges. This matrix $B$ captures the intra- and inter-community connections and is assumed to be constant. 
$P(i \leftrightarrow j | C_i = a, C_j = b) = B_{ab}$ denotes the probability of an edge existing between nodes $i$ and $j$ when node $i$ belongs to community $a$ and node $j$ belongs to community $b$. Using these probabilities, the SBM generates a network by independently sampling the presence or absence of an edge for each pair of nodes based on their community assignments and the block probability matrix $B$.

\textbf{Degree Corrected SBM.}
DC-SBM\citep{KarrerNewman2011DCSBM} takes an extra set of parameters $\theta_i$ controlling the expected degree of vertex $i$. Now, the probability of an edge between two nodes (using the same notation as above) becomes $\theta_i\theta_jB_{ab}$. This was introduced to handle the heterogeneity of real-world graphs.

We selected this widely used and studied model for our analysis primarily because it considers a degree parameter for all nodes, resembling a key characteristic of real-world graphs. Therefore, the DC-SBM is closer to real-world graphs than the simpler SBM while still being analyzable, making it relevant here.

\textbf{ADC-SBM Generation.}
We make use of the \texttt{graph\_tool} library to generate the DC-SBM adjacency matrix, with $p=1000, k=4$. To generate the $B$ matrix, we follow the procedure in \citep{DMoN_JMLR_2023}, by taking expected degree for each node $d = 20$ and expected sub-degree $d_{out} = 2$. This gives us $B$ as: 
\[\begin{bmatrix}
    18 & 2 & 2 & 2 \\
    2 & 18 & 2 & 2 \\
    2 & 2 & 18 & 2 \\
    2 & 2 & 2 & 18 \\
\end{bmatrix}\]
Also, $\theta$ is generated by sampling a power-law distribution with exponent $\alpha=2$. We constrain the generated vector to $d_{min} = 2$ and $d_{max}=4$. 

To generate features, we use the \texttt{make\_classification} function in the \texttt{sklearn} library. We generate a $128$-dimensional feature vector for each node, with no redundant channels. These belong to $k_f$ groups, where $k_f$ might not be equal to $k$. We test three scenarios: a) matched clusters ($k_f=k$) b) nested features ($k_f>k$) c) grouped features ($k_f<k$) as visualized in Figure \ref{fig:supp_sbm}. Note that for better visualization, \texttt{class\_sep} was increased to 5 (however, the results are given with a value of 1, which is a harder problem).

Additionally, we also consider both cases, with and without the coarsening constraint term.

\textbf{Results.} Our objective is able to completely recover the ground truth labels (NMI/ARI/ACC = 1) under all the specified conditions. 

\section{Results on very large datasets}
\label{supp:large_dataset_results}
The results are presented in Table \ref{table:attributed_results_large}.

\begin{table}[ht]
    \centering
    \begin{adjustbox}{width=\linewidth}
    \begin{tabular}{lccc@{\hskip 20pt}ccc@{\hskip 20pt}ccc@{\hskip 20pt}ccc@{\hskip 20pt}ccc}
    \toprule[1.5pt]
    & \multicolumn{3}{c}{CoauthorCS} & \multicolumn{3}{c}{CoauthorPhysics} & \multicolumn{3}{c}{AmazonPhoto} & \multicolumn{3}{c}{AmazonPC} & \multicolumn{3}{c}{ogbn-arxiv} \\
    \cmidrule(r){2-4} \cmidrule(r){5-7} \cmidrule(r){8-10} \cmidrule(r){11-13} \cmidrule(r){14-16}
    \textbf{Method} & \textbf{ACC $\uparrow$} & \textbf{NMI $\uparrow$} & \textbf{ARI $\uparrow$} & \textbf{ACC $\uparrow$} & \textbf{NMI $\uparrow$} & \textbf{ARI $\uparrow$}  & \textbf{ACC $\uparrow$} & \textbf{NMI $\uparrow$} & \textbf{ARI $\uparrow$}  & \textbf{ACC $\uparrow$} & \textbf{NMI $\uparrow$} & \textbf{ARI $\uparrow$}  & \textbf{ACC $\uparrow$} & \textbf{NMI $\uparrow$} & \textbf{ARI $\uparrow$} \\ 
    \midrule[1.5pt]
        FGC & 69.6 & 70.4 & 61.5 & 69.9 & 60.9 & 49.5 & 44.9 & 38.3 & 22.5 & 46.8 & 36.2 & 23.3 & 24.1 & 8.5 & 9.1 \\
        \textbf{Q-MAGC (Ours)} & 70.2 & 76.4 & 60.2 & 75.3 & 67.2 & 66.1 & 70.4 & 66.6 & \textbf{58.6} & \textbf{62.4} & 51 & 31.1 & 35.8 & 24.4 & 15.6 \\
        \textbf{Q-GCN (Ours)} & 85.4 & 79.6 & 79.7 & 85.2 & \textbf{72} & \textbf{81.6} & 66.3 & 57.6 & 48.3 & 56.7 & 42.4 & 28.8 & 34.4 & 27.1 & 19.7 \\
        \textbf{Q-VGAE (Ours)} & \textbf{85.6} & \textbf{79.9} & \textbf{81.6} & \textbf{86.7} & 69 & 77.7 & 69.0 & 59.4 & 49.0 & 62.3 & 45.7 & \textbf{47.2} & \textbf{39.5} & 30.4 & \textbf{24.7} \\
        \textbf{Q-GMM-VGAE (Ours)} & 70.1 & 72.5 & 61.6 & 83.1 & 71.5 & 76.9 & \textbf{76.8} & \textbf{67.1} & 58.3 & 55.5 & \textbf{56.4} & 40 & OOM & OOM & OOM \\
        DMoN & 68.8 & 69.1 & 57.5 & 45.4 & 56.7 & 50.3 & 61.0 & 63.3 & 55.4 & 45.4 & 49.3 & 47.0 & 25.0 & \textbf{35.6} & 12.7 \\
    \bottomrule[1.5pt]
    \end{tabular}
    \end{adjustbox}
    \caption{Comparison of methods on large attributed datasets.}
    \label{table:attributed_results_large}
\end{table}

\section{Implementation}
\label{supp:code}
The implementations for all the experiments can be found at \url{https://anonymous.4open.science/r/MAGC-8880/}.

We have extensively used the PyTorch\citep{NEURIPS2019_9015_Pytorch} and PyTorch Geometric\citep{fey2019graph_pytorch_geometric} libraries in our implementations and would like to thank the authors and developers.

\section{Explanation on why VAE manifolds are curved}
\label{supp:vgae_manifolds}
Embedded manifolds obtained from VGAEs are curved and must be flattened before any clustering algorithms using Euclidean distance are applied. 

The latent space of a VAE is not constrained to be Euclidean. \citet{connor21a-vaelatentstructure} point out that the variational posterior is selected to be a multivariate Gaussian, and that the prior is modeled as a zero-mean isotropic normal distribution which encourages grouping of latent points around the origin. Works such as \citep{chen20i-vaeflatmanifolds, bogdanov2021-vaelatentspaces, arvanitidis2018latent} make the VAE latent space to be Euclidean/Hyperbolic/Riemannian, and show good visuals. Moreover, it can be observed in our own work (Figure \ref{fig:cora_latent}a and Appendix Figure \ref{fig:supp_latent}a) that the latent manifolds are curved and so, are not suitable for conventional methods such as k-means clustering, which need Euclidean distance.

\section{Complexities of some graph clustering methods}
\label{supp:common_complexities}
Some GCN\emph{-based} clustering methods:
\begin{itemize}[noitemsep]
    \item AGC\citep{agc2019} - $\mathcal{O}(p^2nt+ent^2)$\\
    where $t$ is the number of iterations (within an epoch)
    \item R-VGAE\citep{Mrabah2022RethinkingVGAE} - $\mathcal{O}(pk^2n+(p(n+k)+e(p+k))$
    \item S3GC\citep{s3gc_devvrit2022} - $\mathcal{O}(pn^2s)$\\ 
    where $s$ is the average degree
    \item HSAN\citep{HSAN} - $\mathcal{O}(pBn)$\\
    they state it as $\mathcal{O}(B^2d)$ but that is only for 1 batch of size $B$ and not the whole epoch
    \item VGAECD-OPT\citep{vgaecd-opt} - $\mathcal{O}(p^2n D^L)$\\ 
    where $D$ is the size of graph filter, $l$ is the number of linear layers
\end{itemize}

\section{Evolution of different loss terms throughout training}
\label{supp:loss_evol}
 Figure \ref{fig:loss_evol} - Each separate series has been normalized by its absolute minimum value to see convergence behavior on the same graph easily. Every series is decreasing/converging (except gamma, which represents sparsity regularization and remains almost constant). Thus, we can be assured that no terms are counteracting and hurting the performance. The legend is provided in the graph itself. This plot is on the Cora dataset.

\begin{figure}
    \centering
    \includegraphics[width=\linewidth]{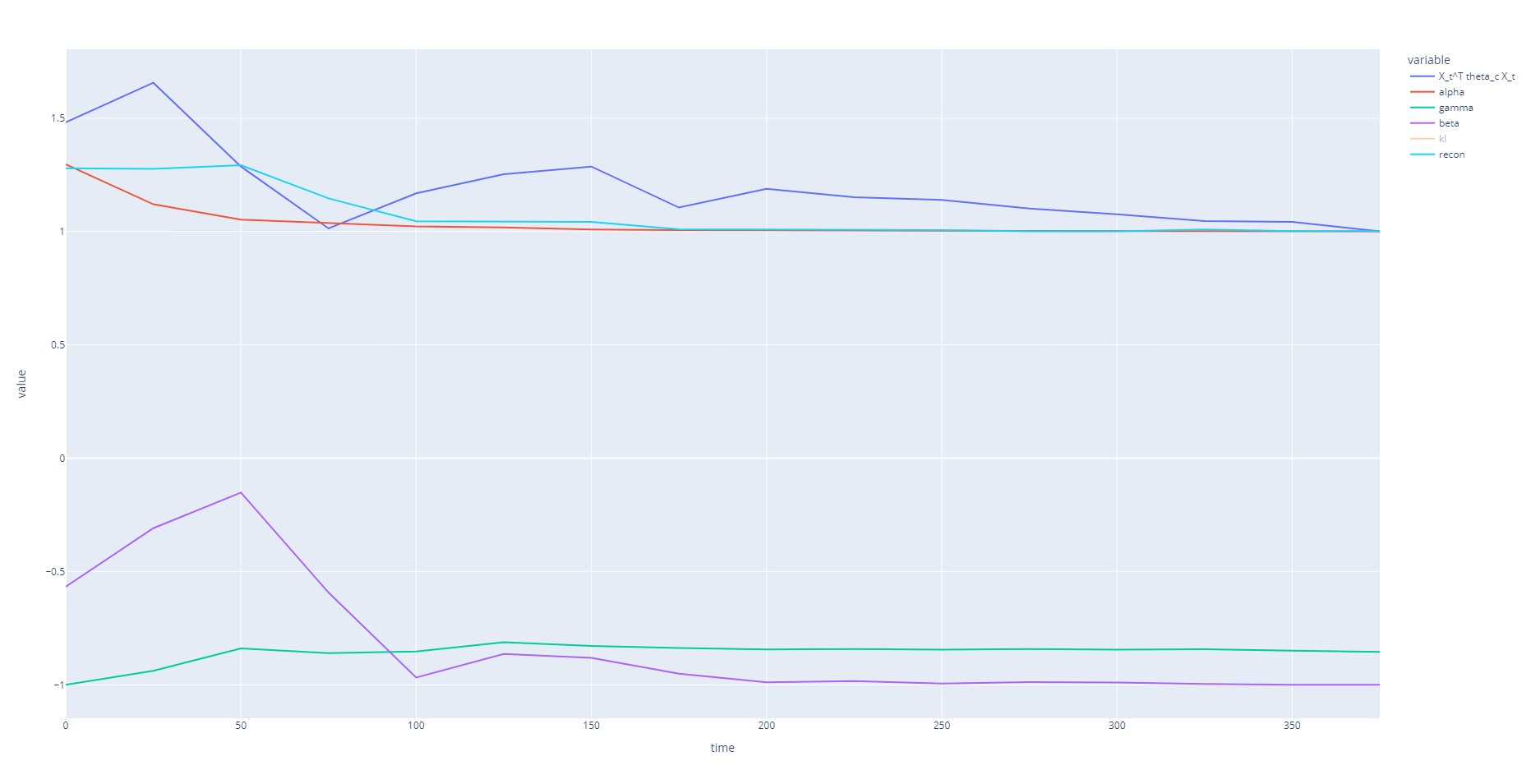}
    \caption{Evolution of the different loss terms throughout training, denoted by their weight parameters. Also the term \texttt{X\_t\^{}T theta\_C X\_t} term is the smoothness term $tr(X_C^T C^T \Theta C X_C)$}
    \label{fig:loss_evol}
\end{figure}

\end{document}